\documentclass{article}


\usepackage[numbers]{natbib}
\usepackage[preprint]{neurips23}

\usepackage{hyperref}
\usepackage{url}

\usepackage{enumerate}
\usepackage{amsmath, amsfonts, amssymb, amsthm, amsbsy, amscd, bm, bbm,mathrsfs}    \usepackage{graphicx}
\usepackage{subcaption}
\usepackage[capitalize,noabbrev]{cleveref}
\usepackage{tikz}
\newcommand{\ngroup}{M}

\newtheorem{theorem}{Theorem}
\newtheorem{othertheorem}{othertheorem}[section]
\newtheorem{lemma}[othertheorem]{Lemma}

\theoremstyle{definition}
\newtheorem{definition}[othertheorem]{Definition}
\newtheorem{remark}[othertheorem]{Remark}
\newtheorem{assumption}[othertheorem]{Assumption}
\theoremstyle{definition}

\newtheorem{fact}[othertheorem]{Fact}

\usepackage{algorithm}
\usepackage[noend]{algpseudocode}
\usepackage{titlesec}
\usepackage{wrapfig}
\usepackage{multirow}
\usepackage{soul}
\newcommand{\Note}[1]{}
\renewcommand{\Note}[1]{\hl{[#1]}}

\graphicspath{{figs/}}

\renewcommand{\a}{\bm{a}}
\newcommand{\s}{\bm{s}}
\renewcommand{\v}{\bm{v}}
\newcommand{\g}{\bm{g}}

\newcommand{\I}{\mathbf{I}}

\newcommand{\R}{\mathbb{R}}
\newcommand{\w}{\mathbf{w}}
\newcommand{\W}{\mathbf{W}}
\renewcommand{\s}{\bm{s}}

\newcommand{\E}{\mathbb{E}}
\renewcommand{\P}{\mathbb{P}}
\newcommand{\M}{\mathcal{M}}
\newcommand{\G}{\mathbf{G}}

\usepackage{pifont}

\usepackage{tikz}
\usetikzlibrary{fit,calc}

%

\colorlet{pink}{red!40}
\colorlet{blue}{cyan!60}




\title{On the accuracy and efficiency of group-wise clipping in differentially private optimization}

%

\author{%
  Zhiqi Bu\thanks{Code is available at FastDP library \url{https://github.com/awslabs/fast-differential-privacy}.} \\
  AWS AI\\
  \texttt{zhiqibu@amazon.com} 
  \And   Ruixuan Liu\thanks{Work done during internship at AWS AI.}\\
  AWS AI, Emory University\\
  \texttt{ruixuan.liu2@emory.edu} 
  \And   Yu-Xiang Wang\\
  AWS AI, UC Santa Barbara\\
  \texttt{yuxiangw@cs.ucsb.edu} 
    \And   Sheng Zha\\
  AWS AI\\
  \texttt{zhasheng@amazon.com} 
    \And   George Karypis\\
  AWS AI\\
  \texttt{gkarypis@amazon.com} 
\\
}


\usepackage{amsmath,amsfonts,bm}


















\def\1{\bm{1}}










\DeclareMathAlphabet{\mathsfit}{\encodingdefault}{\sfdefault}{m}{sl}
\SetMathAlphabet{\mathsfit}{bold}{\encodingdefault}{\sfdefault}{bx}{n}

















\begin{document}
\maketitle
\vspace{-0.4cm}
\begin{abstract}
Recent advances have substantially improved the accuracy, memory cost, and training speed of differentially private (DP) deep learning, especially on large vision and language models with millions to billions of parameters. In this work, we thoroughly study the per-sample gradient clipping style, a key component in DP optimization. We show that different clipping styles have the same time complexity but instantiate an accuracy-memory trade-off: while the all-layer clipping (of coarse granularity) is the most prevalent and usually gives the best accuracy, it incurs heavier memory cost compared to other group-wise clipping, such as the layer-wise clipping (of finer granularity). We formalize this trade-off through our convergence theory and complexity analysis. Importantly, we demonstrate that the accuracy gap between group-wise clipping and all-layer clipping becomes smaller for larger models, while the memory advantage of the group-wise clipping remains. Consequently, the group-wise clipping allows DP optimization of large models to achieve high accuracy and low peak memory simultaneously.
\end{abstract}

\vspace{-0.4cm}
\section{Introduction}
Differentially private (DP) optimization of deep learning models has enjoyed amazing accuracy and rigorous guarantee against privacy risks. For example, recent successes of DP GPT2 \cite{li2021large,bu2022automatic,yu2021differentially} have achieved $64.6$ BLEU score (considered as `often better than human') at strong privacy guarantee ($\epsilon=3$), on the E2E restaurant review dataset. This is only marginally below the standard non-private GPT2 which achieves 66.8 BLEU. On computer vision tasks, under strong privacy guarantee $\epsilon=2$, DP vision models have achieved $97.1\%/86.2\%$ accuracy on CIFAR10/100 by \cite{bu2022scalable} and over $81\%$ accuracy on ImageNet by \cite{de2022unlocking,mehta2022large}.

These advances are realized through DP optimization, which applies the standard SGD/Adam on the \textit{private gradient} \eqref{eq:DP optimizers} instead of the regular gradient $\sum_i \g_i$:
\begin{align}
\text{DP-SGD:}&\quad  \w_{t+1}=\w_t-\eta_t\G_\text{private},
\text{ where }\G_\text{private}:=\sum\nolimits_i \g_i\cdot C(\g_i;R)+\sigma_\text{DP}R\cdot\mathcal{N}(0,\I).
\label{eq:DP optimizers}
\end{align}
Here $\g_i$ is the per-sample gradient of loss $L_i$ to the model parameter $\w$, $\eta_t$ is the learning rate, $\sigma_\text{DP}$ is the noise level to account for the privacy loss \cite{abadi2016deep,mironov2017renyi,dong2019gaussian,bu2020deep,gopi2021numerical,zhu2021optimal,koskela2020computing}, and $C$ is the clipping factor from some clipping function, so that $\g_i\cdot C(\g_i)$ performs the per-sample gradient clipping. For instance, one can use the Abadi's \citep{abadi2016deep} or the automatic (AUTO) clipping function \citep{bu2022automatic}, both giving an equally strong privacy guarantee.
One important yet under-studied subject in DP optimization is the per-sample gradient clipping style. In most of the existing works, the privatization \eqref{eq:DP optimizers} takes place on the gradient of all trainable parameters, known as the  all-layer or flat clipping style \cite{abadi2016deep}. In fact, this widely-used style can be generalized to the group-wise gradient clipping \cite{mcmahan2018general}, which can improve the memory efficiency at the cost of possibly worse accuracy. Generally speaking, the group-wise clipping assigns the trainable parameters to $M$ groups and privatizes each group separately. We denote the parameters in the $m$-th group as $\W^{(m)}$ and the corresponding per-sample gradient $\g_i^{(m)}=\partial L_i/\partial \W^{(m)}$, whereas the whole gradient is concatenated as $\g_i=[\g_i^{(1)},\g_i^{(2)},\cdots,\g_i^{(M)}]$. A vector of clipping thresholds $[R_1,R_2,\cdots,R_M]\in\R^M$ are applied to each group, so that the $m$-th group's private gradient is
$$ \G_\text{private}^{(m)}=\sum\nolimits_ i\g_i^{(m)} C(\g_i^{(m)};R_m)+\sigma_\text{DP} \|[R_1,R_2,\cdots]\|\cdot\mathcal{N}(0,\I_m).$$
In this context, the all-layer clipping means $M=1$ and another example is the layer-wise or per-layer clipping \cite{mcmahan2018learning,bu2021convergence,anonymous2023exploring}, which treats the parameters (weights and biases) in each layer as one group, hence $M$ equals the number of layers in the neural network. As we will show, the choice of group-wise clipping style has significant influences over the convergence and thus the accuracy (see \Cref{fig:groups number}), the computational efficiency, and the algorithmic design (see \Cref{fig:flowcharts}).

\begin{figure}[!htb]
\centering
\includegraphics[width=0.475\linewidth]{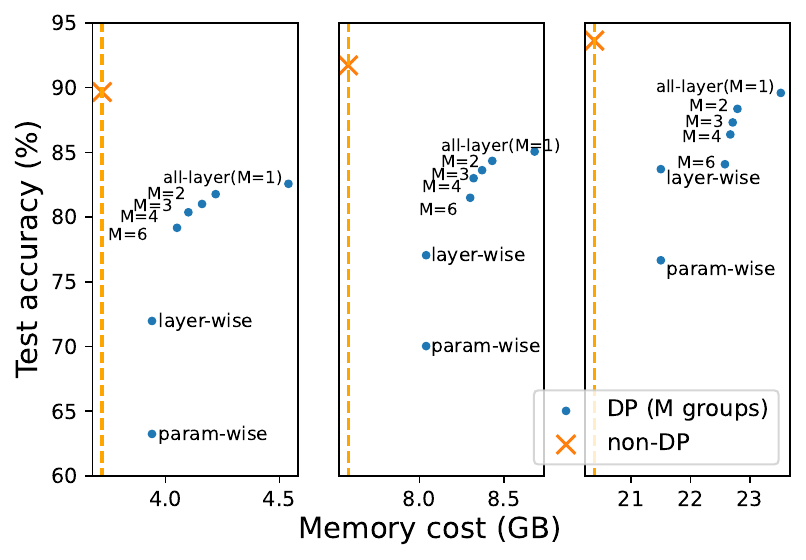}
    \includegraphics[width=0.48\linewidth]{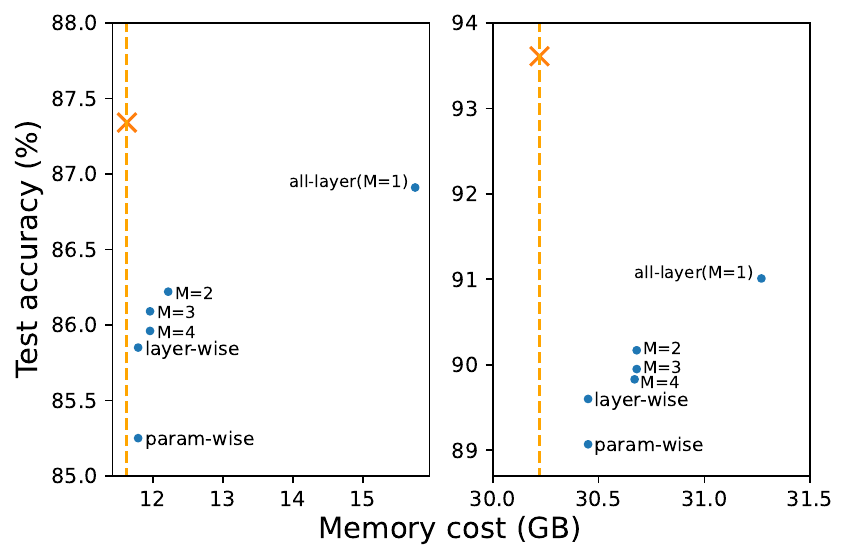}
    \vspace{-0.2cm}
    \caption{Accuracy and memory on CIFAR100 ($\epsilon=2$, virtual batch size 50, left to right: ViT-small/base/large) and QNLI ($\epsilon=3$, virtual batch size 40, left to right: RoBERTa-base/large).}
    \label{fig:groups number}
\end{figure}

Orthogonal to the per-sample gradient clipping style, a long list of researches have been devoted to efficiently implement the DP optimization, by increasing the training speed and/or reducing the memory cost. One approach is to improve the computational efficiency algorithmically, without affecting the accuracy. For instance, the slowdown of DP optimization (compared to non-DP optimization) has been improved from $24\times$ on small CNN with Tensorflow XLA compiler \cite{subramani2021enabling}, to $9\times$ in JAX \cite{de2022unlocking}, to roughly $2\times$ on GPT2 / RoBERTa / ViT by the ghost clipping technique \cite{li2021large,bu2022scalable}, and finally to $1.1\times$ by the Book-Keeping (BK) technique \cite{bu2022differentially}. In this work, the clipping styles are implemented by the BK algorithm for the best algorithmic efficiency.

In this work, we study the group-wise clipping style in depth, with a specific emphasis on its convergence and its algorithmic relation to the back-propagation. We observe that the family of group-wise clipping instantiates an accuracy-memory trade-off (i.e. more groups, better memory, worse accuracy), whose two endpoints are the all-layer and layer-wise clipping. In fact, we can group the trainable parameters so as to achieve the best DP accuracy and low memory cost simultaneously (to be demonstrated in \Cref{fig:nonuniform breaks}).


\subsection{Contributions}
\begin{enumerate}
\item \textbf{[Novel clipping styles]} We propose novel choices of the group-wise clipping that are equally fast and private. The uniform clipping is easy to design and instantiates an accuracy-memory trade-off to select; the non-uniform clipping can achieve high accuracy and low memory cost beyond the trade-off, though being harder to design. 

\item \textbf{[A convergence theory]} We provide the first convergence result of the group-wise clipping in \Cref{thm: convergence DPSGD AUTO}, showing that DP-SGD has the same asymptotic convergence rate $O(T^{-1/4})$ as the standard SGD, but the convergence guarantee worsens as the number of groups increases.

\item \textbf{[Guaranteed algorithmic efficiency]} We implement our group-wise clipping efficiently so that all group-wise clipping enjoy almost the same training speed as the standard non-DP optimization. This contrasts with prior work which claims that all-layer clipping is about $1.5\sim 2\times$ slower than layer-wise clipping (see \cite[Figure 1b]{anonymous2023exploring}).

\item \textbf{[Peak memory profile]} We provide an explicit memory profile in \Cref{fact:peak}, which explains the peak memory of DP optimization and guides the design of group-wise clipping towards larger batch size and faster training.

\item \textbf{[New baselines]} We experiment different choices of group-wise clipping on a range of new DP tasks. We empirically demonstrate that, with the proper group-wise clipping, DP optimization can achieve better accuracy at lower memory cost.
\end{enumerate}

\subsection{Related works}
\label{sec:previous arts}
Group-wise clipping can be implemented via different algorithms in Tensorflow-privacy, Opacus \cite{opacus}, FastGradClip \cite{lee2020scaling}, private-transformer \cite{li2021large}), private-vision \cite{bu2022scalable}, FastDP (using BK algorithm \cite{bu2022differentially}). We use BK to implement the group-wise clipping style in \Cref{alg:dpsgd-ghostBK}, due to its state-of-the-art efficiency on large-scale vision and language tasks. Notice that, prior to this work, BK algorithms only comes with the all-layer clipping style.

The general concept of group-wise clipping covers a family of gradient clipping styles. The most popular one is the all-layer clipping, which groups all layers into one group and usually enjoys the highest accuracy among other clipping styles. The layer-wise clipping instead groups each layer into a group, thus requiring a long vector of clipping thresholds when the model is of hundreds of layers. These additional hyperparameters $[R_1,R_2,...]$ are difficult to tune manually and oftentimes introduce extra privacy risk if tuned adaptively to the data \cite{andrew2021differentially, anonymous2023exploring}. Similarly, the parameter-wise clipping used by Opacus \cite{opacus} groups each parameter (weight and bias) into a group\footnote{Note that Opacus(v1.3) claims to support layer-wise clipping though they actually support parameter-wise clipping; see its \href{https://github.com/pytorch/opacus/blob/main/examples/cifar10.py}{CIFAR10 example Line 341.}}. \cite{mcmahan2018general} proposes the type-wise clipping such that linear layers form a group and convolution layers form another group. In distributed learning, large models are partitioned into multiple devices, each of which defines a group according to the per-device clipping \cite{anonymous2023exploring}.

\begin{wraptable}{r}{0.45\linewidth}
\centering
\vspace{-0.5cm}
\caption{List of group-wise per-sample gradient clipping styles.}
\begin{tabular}{c|c}
Clipping style     & Reference 
\\\hline
all-layer     & \cite{abadi2016deep}
\\
layer-wise     & \cite{mcmahan2018learning}
\\
param-wise     & \cite{opacus}
\\
type-wise     & \cite{mcmahan2018general}
\\
per-device     & \cite{anonymous2023exploring}
\\\hline
uniform     & this work
\\non-uniform     & this work 
\end{tabular}
\vspace{-1cm}
\end{wraptable}

In contrast to existing works which focus on specific clipping styles, we explore the whole class of group-wise clipping, thus to reveal an accuracy-memory trade-off. Different from the empirical nature in the literature, we give the first convergence theory of group-wise clipping, and the first peak memory profile from the complexity analysis. We emphasize that the choice of group-wise clipping style can serve as a strong alternative to the adaptive clipping threshold \cite{andrew2021differentially,anonymous2023exploring}.

\section{Preliminaries}
\subsection{Differential privacy in deep learning}
We work with the $(\epsilon,\delta)$-DP by \cite{dwork2006calibrating}, where strong DP is indicated by small $(\epsilon,\delta)$ and means it is difficult for any privacy attacker to distinguish or detect an arbitrary training sample.
\begin{definition}[\cite{dwork2006calibrating}]\label{def:DP}
A randomized algorithm $M$ is $ (\varepsilon, \delta)$-DP if, for any two neighboring datasets $S,S^{\prime}$ that differ by one data point and for any event $E$,
\begin{align}
 \mathbb{P}[M(S) \in E] \leqslant \mathrm{e}^{\varepsilon} \mathbb{P}\left[M\left(S^{\prime}\right) \in E\right]+\delta.
\end{align}
\end{definition}
In deep learning, DP is realized by applying SGD, Adam\cite{KingmaB14}, LAMB\cite{you2019large}, FedAvg\cite{mcmahan2017communication}, etc. on the private gradient \eqref{eq:DP optimizers} with respect to the trainable parameters, which are partitioned into $M$ groups and assigned with $M$ clipping thresholds.
\\\\

We now declare our setting throughout this work:
\begin{enumerate}
    \vspace{-0.1cm}
    \item We use automatic (AUTO) clipping function $C_i=C(\g_i)=\frac{1}{\|\g_i\|_2+0.01}$; 
    \vspace{-0.1cm}
    \item We use Renyi DP \citep{mironov2017renyi} accoutant for all experiments;
    \vspace{-0.1cm}
    \item We use non-adaptive clipping threshold $R_m=1/\sqrt{M}$ \citep{mcmahan2018learning}; 
    \vspace{-0.1cm}\item We use the BK algorithm \citep{bu2022differentially} as our backbone DP implementation.
\end{enumerate}
\vspace{-0.2cm}

\subsection{Back-propagation}
\vspace{-0.15cm}
The efficiency of DP optimization is critically determined by that of the per-sample gradient clipping, which can be implemented with marginal overhead by the BK algorithm \cite{bu2022differentially}. Specifically, the BK algorithm makes DP optimization (with the all-layer clipping) almost as efficient as the non-DP optimization, by re-arranging the computation of output gradients and parameter gradients. To see this, we describe two sub-processes of the back-propagation: consider a linear layer (the $l$-th layer),\footnote{Convolution and embedding layers are equivalent to linear layers (see  \cite{li2021large,bu2022scalable}), which contain $\approx 99.9\%$ of model parameters (see Table 7 in \cite{bu2022differentially}).}
$\a_{(l+1)}=\phi(\s_{(l)})=\phi(\a_{(l)}\W_{(l)}), $
where $\a\in\R^{BTd}$ is layer's input (here $B$ being batch size, $T$ being sentence length or number of pixels), $\s\in\R^{BTp}$ is layer's output, $\W\in\R^{dp}$ is weight, and $\phi$ is any inter-layer operation like ReLU or pooling.

During the forward propagation, $\a_{(l)}$ is computed and stored. During the back-propagation, at each layer, the output gradient $\frac{\partial L}{\partial \s_{(l)}}$ is computed
and then produces the parameter gradient:
\begin{align}
\text{Standard: } \frac{\partial L}{\partial \W_{(l)}}=\frac{\partial \sum_i L_i}{\partial \W_{(l)}}=\a_{(l)}^\top\frac{\partial L}{\partial \s_{(l)}}, 
& 
\text{ DP: }\frac{\partial \sum_i C_i L_i}{\partial \W_{(l)}}=\a_{(l)}^\top\text{diag}(C_1,\cdots,C_B)\frac{\partial L}{\partial \s_{(l)}}.
\label{eq:standard DP grad}
\end{align}
Here the per-sample gradient norm (or the clipping factor $C_i$) can be computed at small cost, i.e. $<10\%$ memory overhead and $\approx 20\%$ slowdown for large models (see Figure 5 in \cite{bu2022differentially}).

\vspace{-0.2cm}
\subsection{Clipping thresholds}
\vspace{-0.15cm}
Tuning the clipping threshold vector $\{R_m\}\in\R^M$ can be expensive for a network with hundreds of layers. The simplest choice is to use the same clipping threshold for all groups \cite{mcmahan2018learning}: $R_1=\cdots=R_M=R/\sqrt{M}$. Such a choice is data-independent and model-driven, and we adopt this for the AUTO private gradient: for the $m$-th group,
\vspace{-0.2cm}
\begin{align}
\G_\text{private}^{(m)}=\sum_i\frac{\g_i^{(m)}}{\sqrt{M}(\|\g_i^{(m)}\|_2+0.01)}+\sigma_\text{DP}\cdot\mathcal{N}(0,\I_m).
\label{eq:auto group priv grad}
\end{align}
It is also possible to use the adaptive data-driven clipping threshold \cite{andrew2021differentially,anonymous2023exploring,golatkar2022mixed}, although it needs either extra training data or extra privacy budget. For example, one can use 90\% quantile of per-sample gradient norms from the public data as the clipping threshold on the private data, or use a second DP-SGD to learn the adaptive clipping thresholds as hyperparameters, thus adding to the computation cost and privacy budget. Empirically, the benefit of adaptive clipping threshold is insignificant, as illustrated in \Cref{tab:sst2 compare}.

\begin{table}[!htb]
\centering
\vspace{-0.2cm}
\caption{Test accuracy of SST2 dataset at $\epsilon=3$. Results other than ours are from \cite{anonymous2023exploring}.
}
\resizebox{0.8\linewidth}{!}{
\begin{tabular}{ccccc}
&Clipping style&10 epochs&20 epochs&30 epochs
\\
\hline \multirow{4}{*}{ RoBERTa-base } & all-layer & $90.53$ & $90.76$ & $91.27$ \\
& layer-wise (adaptive) & $91.30$ & $91.57$ & $92.10$ \\
& all-layer (ours)& $92.32$ & $92.66$ & $93.00$ \\
& layer-wise (ours) & $92.09$ & $92.43$ & $92.55$ \\
\hline \multirow{4}{*}{ RoBERTa-large } & all-layer & $93.00$ & $93.50$ & $93.90$ \\
& layer-wise (adaptive) & $92.80$ & $93.63$ & $93.67$ \\
& all-layer (ours) & $94.50$ & $94.84$ & $94.95$ \\
& layer-wise (ours) & $94.15$ & $94.27$ & $94.38$ \\
\hline
\end{tabular}
}
\vspace{-0.2cm}
\label{tab:sst2 compare}
\end{table}

\section{Algorithm for group-wise clipping}
\label{sec:algo}
\begin{wrapfigure}{r}{0.5\linewidth}
\centering
    \includegraphics[width=0.99\linewidth]{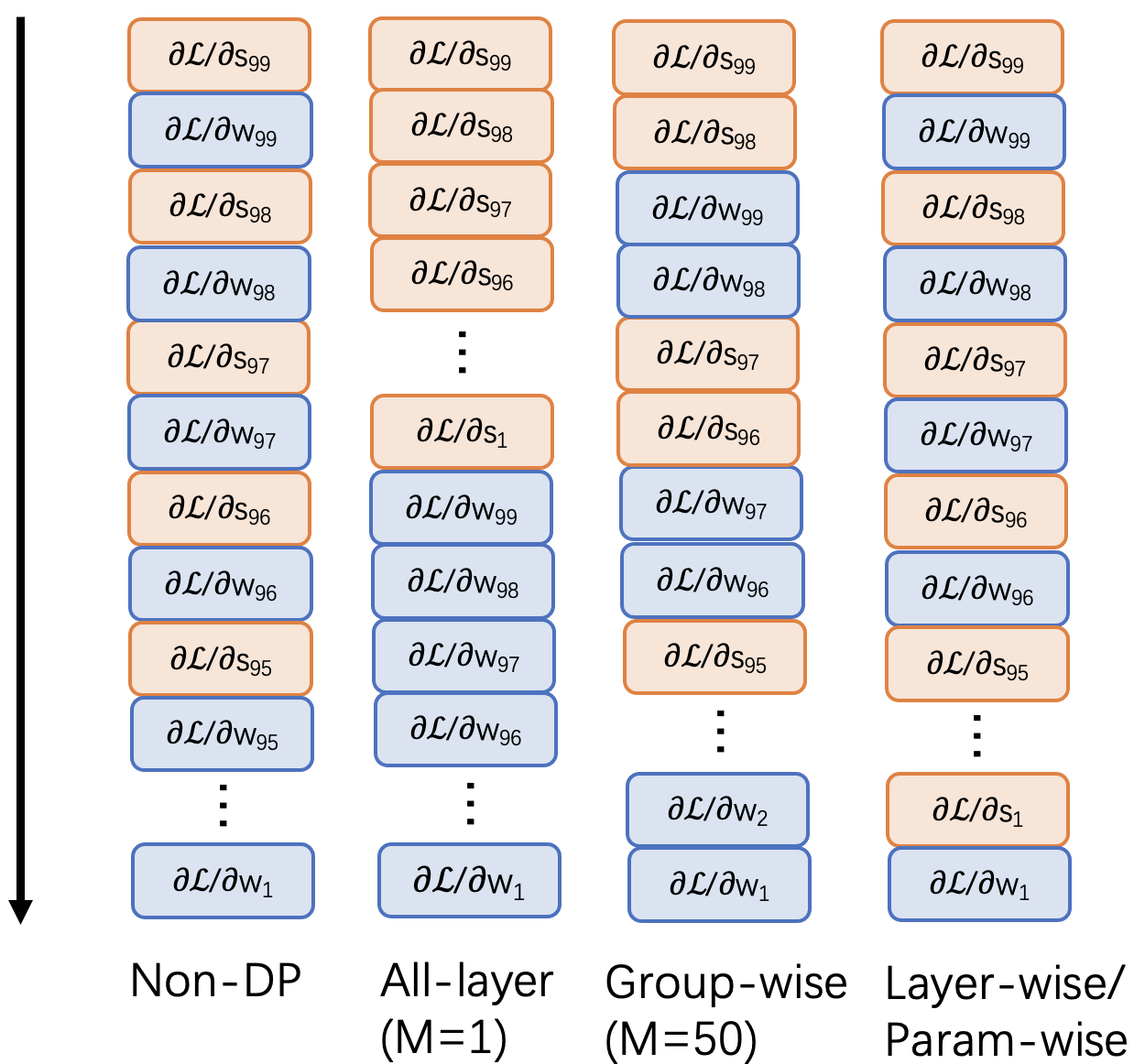}
    \vspace{-0.3cm}
\caption{Back-propagation of BK algorithm with the group-wise clipping style.
}
\label{fig:flowcharts}
\end{wrapfigure}

In this section, we modify the back-propagation so as to efficiently implement the group-wise clipping. We note that in \eqref{eq:standard DP grad}, to derive the clipping factor $C_i$, the output gradients are book-kept until all layers in the current group have been back-propagated. This is visualized in \Cref{fig:flowcharts} as the stacking of different colors, which represent the computation of output gradients and parameter gradients: consider an 100-layer network and $M=50$, then the group-wise clipping factor $C_i$ is computed only if two layers have been back-propagated; when $M=1$ (all-layer clipping), $C_i$ is computed after all 100 layers have been back-propagated. 

\begin{remark}
The layer-wise/param-wise clipping does not re-arrange the order of back-propagation (see the similarity of colors in \Cref{fig:flowcharts}). This feature is particularly desirable for distributed learning, where back-propagation involves communication and rewriting the orchestration is hard.
\end{remark}

In \Cref{alg:dpsgd-ghostBK}, we implement our group-wise clipping in DP optimization, following \Cref{fig:flowcharts}\footnote{We adopt the BK algorithm which is much faster than its alternatives -- GhostClip \cite{li2021large,bu2022scalable} and Opacus\cite{opacus}.}.

\begin{algorithm}[!htb]
\caption{Differentially private optimization with group-wise clipping}
\textbf{ Parameters:} $l$-th layer weights $\W_{(l)}$, $m$-th group weights $\W^{(m)}$, noise level $\sigma_\text{DP}$.
\begin{algorithmic}[1]
\For{layer $l= 1,2,\cdots$}
    \State Get activation $\{\a_{(l)}\}$ by standard forward propagation
\EndFor
\For{layer $l=\cdots,2,1$}
\State Get output gradient $\{\frac{\partial L}{\partial \s_{(l)}}\}$ by standard backward propagation
\State Compute the layer-wise per-example gradient norm $\|\frac{\partial L_i}{\partial\W_{(l)}}\|^2$
\If{$l$ is the first layer of the $m$-th group $\mathcal{G}_m$}
\State{Aggregate gradient norm across layers in this group: $\|\frac{\partial L_i}{\partial\W^{(m)}}\|^2=\sum_{r\in \mathcal{G}_m}\|\frac{\partial L_i}{\partial\W_{(r)}}\|^2$}
\State Compute group-wise per-sample clipping factor: $C_{i}^{(m)}=1/(\|\frac{\partial L_i}{\partial\W^{(m)}}\|+0.01)$
\For{layer $r\in \mathcal{G}_m$}
\State Compute sum of clipped gradients $\G_r=\sum_i\frac{\partial C_{i}^{(m)} L_i}{\partial \W_{(r)}}=\a_{(r)}^\top\text{diag}(C_{i}^{(m)})\frac{\partial L}{\partial \s_{(r)}}$
\State Delete $\{\a_{(r)}\},\{\frac{\partial L}{\partial \s_{(r)}}\}$
\EndFor
\EndIf
\EndFor
\State Apply SGD/Adam/LAMB with the private gradient $\G_\text{private}=\G+\sigma_\text{DP}\cdot \mathcal{N}(0, \I)$
\end{algorithmic}
\label{alg:dpsgd-ghostBK}
\end{algorithm}

\begin{figure}[!htb]
\centering
\vspace{-0.3cm}
\includegraphics[width=0.4\linewidth]{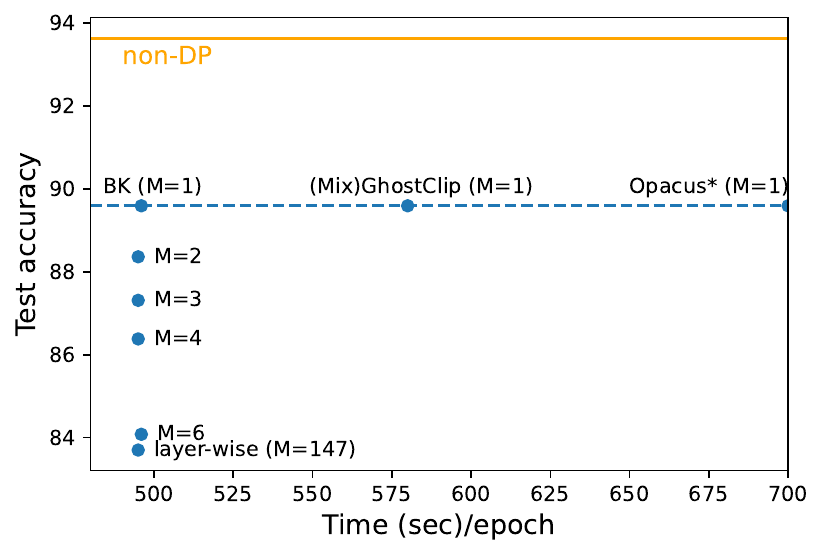}
\includegraphics[width=0.42\linewidth]{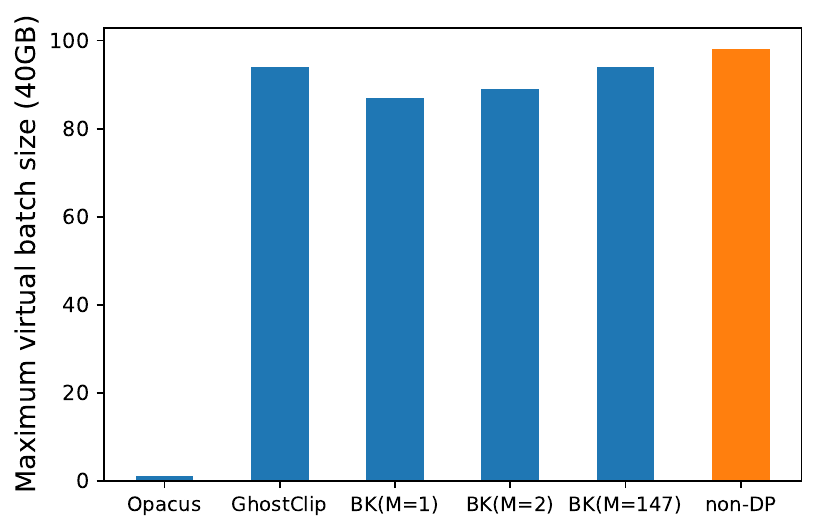}
    \vspace{-0.3cm}
    \caption{Accuracy and memory of ViT-large on CIFAR100 ($\epsilon=2$), with uniform group-wise clipping styles. Virtual batch size is 50 except for Opacus, which incurs OOM error (requiring more than 150GB) and at most uses virtual batch size 1 (more than 8 hours per epoch).}
    \label{fig:same speed}
\end{figure} 

In particular, we emphasize that \textit{all group-wise clippings have the same time complexity} by the BK algorithm, since they only differ in the ordering of computation of output gradients and parameter gradients. This is visualized by the same height in \Cref{fig:flowcharts} and verified in \Cref{fig:same speed} and \Cref{fig:blockwise fix bs}.

\section{Convergence of DP-SGD with group-wise clipping}
\label{sec:convergence}
In this section, we prove the high-probability convergence with any group-wise clipping styles, e.g. layer-wise, parameter-wise, block-wise, all-layer and so on. Our proof only relies on common assumptions in the literature of standard non-DP SGD\footnote{The symmetric gradient noise in \Cref{assumption: tilde g} is widely used for mini-batched SGD analysis \cite{mandt2017stochastic,smith2018don,chaudhari2018stochastic,xie2020diffusion}, which reduces to per-sample gradient when batch size is 1.}.

\begin{assumption}[Lower bound of loss]\label{assumption: lower bounding loss}
For all $\w$, we have $L(\w)\geq L_*$ for some constant $L_*$.
\end{assumption}

\begin{assumption}[Smoothness]
\label{assumption: Lipschitz}
Let $ \g(\w) $ denote the gradient of the objective $ \mathcal{L}(\w) $. Then $ \forall \w, \v $, there is an non-negative constant $\mathcal{L}$ such that
\begin{align}
L(\v)-\left[L(\w)+\g(\w)^\top (\v-\w)\right]
 \leq \frac{\mathcal{L}}{2} \|\w-\v\|^{2}
\end{align}
\end{assumption}

\begin{assumption}[Gradient noise]
\label{assumption: tilde g}
The per-sample gradient noise $\g_{t,i}^{(m)}-\g_t^{(m)}$ is i.i.d. such that
$$\E(\g_{t,i}^{(m)}-\g_{t}^{(m)})=0, \E\|\g_{t,i}^{(m)}-\g_{t}^{(m)}\|^2\leq \xi^2/\ngroup,$$
and $ \g_{t,i}^{(m)}$ is symmetric
about the oracle gradient $\g_{t}^{(m)}$:
$\g_{t,i}^{(m)}-\g_{t}^{(m)}\overset{\mathcal{D}}{=}\g_{t}^{(m)}-\g_{t,i}^{(m)}.$
\end{assumption}

\subsection{Effect of number of groups on the convergence}
\begin{theorem}
\label{thm: convergence DPSGD AUTO}
Under \Cref{assumption: lower bounding loss}, \ref{assumption: Lipschitz}, \ref{assumption: tilde g}, running DP-SGD with AUTO group-wise clipping \eqref{eq:auto group priv grad} for $T$ iterations gives, for arbitrarily small and positive $\varrho$:
\begin{align}
\max_t \P\left(\|\g_{t}\|<O(\varrho^{-3/2}T^{-1/4})\right)
\geq 1-\varrho
\label{eq:min grad norm}
\end{align}
where $\g_t=[\g_t^{(1)},\cdots,\g_t^{(M)}]$ and $O(\varrho^{-3/2}T^{-1/4})=(20\xi+\frac{\sqrt{\ngroup}}{5})\left(\frac{2\ngroup(L_0-L_*)\mathcal{L}\left(1+\frac{\sigma^2 d}{B^2} \right)}{\varrho T}\right)^{\frac{1}{4}}+O\left({\varrho^{-\frac{3}{2}} T^{-\frac{3}{4}}}\right)$. Note that the result for the all-layer clipping corresponds to $M=1$.

In contrast, running the standard (non-DP) SGD for $T$ iterations gives:
\begin{align}
\max_t \P\left(\|\g_{t}\|<O(\varrho^{-1} T^{-1/4})\right)
\geq 1-\varrho.
\label{eq:min grad norm nonDP}
\end{align}
where $O(\varrho^{-1} T^{-1/4})=\frac{1}{\varrho  T^{1/4}}\sqrt{2(L_0-L_*)\mathcal{L}
+\frac{\xi^2}{B}}$.
\end{theorem}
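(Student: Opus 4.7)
The plan is to extend the single-group AUTO-clipping convergence analysis of \cite{bu2022automatic} to $\ngroup$ groups by treating each group's privatized gradient separately and then aggregating. Applying smoothness (\Cref{assumption: Lipschitz}) to the DP-SGD step $\w_{t+1}=\w_t-\eta\G_\text{private}$ with $\G_\text{private}=[\G_\text{private}^{(1)},\ldots,\G_\text{private}^{(\ngroup)}]$ yields
\begin{align*}
L_{t+1}\le L_t-\eta\sum_{m=1}^{\ngroup}\langle\g_t^{(m)},\G_\text{private}^{(m)}\rangle+\tfrac{\eta^2\mathcal{L}}{2}\sum_{m=1}^{\ngroup}\|\G_\text{private}^{(m)}\|^2.
\end{align*}
The Gaussian noise injected into each $\G_\text{private}^{(m)}$ is zero-mean and independent of the data, so the quadratic term contributes $\sigma_\text{DP}^2 d$ in expectation after summing over groups, while its contribution in the linear term vanishes.

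For the signal in the linear term, I would invoke \Cref{assumption: tilde g} group by group: because $\g_{t,i}^{(m)}$ is symmetric about $\g_t^{(m)}$ with per-group variance $\xi^2/\ngroup$, pairing each sample with its reflection $2\g_t^{(m)}-\g_{t,i}^{(m)}$ and using convexity of the map $\x\mapsto\x/\|\x\|$ gives a per-group lower bound of the form
\begin{align*}
\mathbb{E}\bigl\langle\g_t^{(m)},\sum_i\tfrac{\g_{t,i}^{(m)}}{\sqrt{\ngroup}(\|\g_{t,i}^{(m)}\|+0.01)}\bigr\rangle\ \ge\ \frac{B\,\|\g_t^{(m)}\|^2}{\sqrt{\ngroup}\bigl(\|\g_t^{(m)}\|+c\,\xi/\sqrt{\ngroup}\bigr)}
\end{align*}
up to lower-order corrections, which is the per-group analog of the key lemma in \cite{bu2022automatic}. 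Clipped per-sample contributions have norm at most $1/\sqrt{\ngroup}$, so the non-noise part of $\G_\text{private}^{(m)}$ has norm at most $B/\sqrt{\ngroup}$ and hence $\sum_m\mathbb{E}\|\G_\text{private}^{(m)}\|^2\le B^2+\sigma_\text{DP}^2 d$.

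Telescoping the descent over $t=0,\ldots,T-1$, dividing through by $\eta B T$, and choosing $\eta$ to balance the two resulting terms gives a $T^{-1/2}$-rate in expectation on the mixed-ratio quantity $\tfrac{1}{T}\sum_t\sum_m\|\g_t^{(m)}\|^2/(\|\g_t^{(m)}\|+c\xi/\sqrt{\ngroup})$, with leading constant proportional to $\sqrt{\ngroup(L_0-L_*)\mathcal{L}(1+\sigma^2 d/B^2)}$. A per-group two-regime case analysis (either $\|\g_t^{(m)}\|\gg\xi/\sqrt{\ngroup}$ in which case the ratio behaves like $\|\g_t^{(m)}\|$, or $\|\g_t^{(m)}\|\ll\xi/\sqrt{\ngroup}$ in which case it behaves like $\sqrt{\ngroup}\|\g_t^{(m)}\|^2/\xi$) then converts the mixed-ratio bound into one on $\|\g_t\|=\sqrt{\sum_m\|\g_t^{(m)}\|^2}$, incurring a square-root and yielding the advertised $T^{-1/4}$ rate. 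Finally, Markov's inequality applied to $\min_t\|\g_t\|$ upgrades the expected bound to the high-probability form \eqref{eq:min grad norm}; the $\varrho^{-3/2}$ arises because the case-analysis step produces a bound on (a power of) $\|\g_t\|$ rather than $\|\g_t\|$ itself, so Markov effectively compounds. The non-DP baseline \eqref{eq:min grad norm nonDP} follows the same skeleton with $\ngroup=1$, $\sigma_\text{DP}=0$, and without the case analysis, hence only a $\varrho^{-1}$ factor.

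The main obstacle will be establishing the per-group symmetry lemma with clean $\sqrt{\ngroup}$ accounting. The per-group variance $\xi^2/\ngroup$ helps the inner-product lower bound by shrinking the effective ``noise floor'' to $\xi/\sqrt{\ngroup}$, but the clipping threshold $1/\sqrt{\ngroup}$ shrinks the effective step in that group by exactly the same factor, and these two competing $\sqrt{\ngroup}$'s must be tracked precisely so the final constant matches the $\bigl(20\xi+\sqrt{\ngroup}/5\bigr)\bigl(\ldots\bigr)^{1/4}$ form in the theorem. The monotonic degradation in $\ngroup$ is then exactly the \emph{accuracy} side of the accuracy--memory trade-off that motivates the paper.
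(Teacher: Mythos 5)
Your skeleton — descent from smoothness, a symmetry-based lower bound on the per-group signal term, telescoping, tuning $\eta$ to balance terms, then a high-probability conversion — matches the paper's structure. But the central step is not right as stated, and the downstream path diverges from what the paper actually does, in ways that matter.

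First, the key lemma you propose,
\begin{align*}
\mathbb{E}\Bigl\langle\g_t^{(m)},\ \tfrac{\g_{t,i}^{(m)}}{\|\g_{t,i}^{(m)}\|+\gamma}\Bigr\rangle \;\ge\; \frac{\|\g_t^{(m)}\|^2}{\|\g_t^{(m)}\|+c\,\xi/\sqrt{\ngroup}},
\end{align*}
is justified via ``convexity of $\x\mapsto\x/\|\x\|$,'' but this map is not convex, so the reflection-pairing argument does not go through as described. More importantly, the claimed bound has the wrong functional form: the true lower bound (the paper's Lemma on $\M$, imported from \cite{bu2022automatic}) depends explicitly on the regularizer $\gamma$ and vanishes as $\gamma\to0$ when the noise floor $\xi/\sqrt{\ngroup}$ exceeds $\|\g_t^{(m)}\|$, whereas your ratio does not. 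The paper lower-bounds the signal term by $\tfrac12\M\bigl(\|\g_t^{(m)}\|-\tfrac{\xi}{r\sqrt{\ngroup}};\,r,\xi,\gamma\bigr)$, a non-negative increasing function with $\M(0)=0$ that carries an extra free parameter $r$. That $r$ is not cosmetic — it is tuned at the end of the proof to trade off $\sqrt{\ngroup}\,\M^{-1}(a;r)$ (which grows roughly like $a\,r$) against the residual $\xi/r$, and this optimization is what produces the $\sqrt{a}$ dependence and hence the advertised constant $\bigl(20\xi+\sqrt{\ngroup}/5\bigr)$ (the $1/5$ coming from $2\gamma^{-1/2}\cdot\gamma=0.2$ with $\gamma=0.01$).

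Second, the high-probability conversion differs. You propose a per-group two-regime case analysis on $\|\g_t^{(m)}\|$ followed by Markov on $\|\g_t\|$, with the $\varrho$-exponent ``compounding'' loosely. The paper instead applies Markov directly to the bounded quantity $\M\bigl(\|\g_t^{(m)}\|-\xi/(r\sqrt{\ngroup})\bigr)$, takes a union-complement bound over the $\ngroup$ groups to control $\P\bigl(\bigcap_m\{\M(\cdot)<a\}\bigr)$, then passes through $\M^{-1}$ (whose closed form and linear small-$x$ asymptotics are computed separately) to convert the event $\bigcap_m\{\M(\cdot)<a\}$ into an event about $\|\g_t\|\le\sqrt{\ngroup}\,\M^{-1}(a)+\xi/r$. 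The stated $\varrho$-exponent then falls out of $a\propto\varrho^{-1}T^{-1/2}$ and $\min_r\bigl(\sqrt{\ngroup}\M^{-1}(a;r)+\xi/r\bigr)\propto\sqrt{a}$, not from your two-regime/compounding heuristic. Your non-DP sketch, by contrast, is essentially correct and matches the paper (Jensen plus Markov). You have correctly identified the main accounting difficulty (the two competing $\sqrt{\ngroup}$'s), but the proposal as written would need the actual $\M$-type lemma and the explicit optimization over $r$ to close.
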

We observe in \eqref{eq:min grad norm} that partitioning trainable parameters into more groups (larger $M$) negatively affects the convergence guarantee. This is empirically verified in \Cref{fig:groups number} and \Cref{sec:experiments} across various models. We note, this result does not contradict the fact that the layer-wise clipping uses a finer grouping than the all-layer clipping, as we next explain.

\subsection{Finer grouping does not necessarily imply more accurate clipping}
\label{sec:finer}
Suppose we have two groupings: $\mathcal{G}_2$ is finer than $\mathcal{G}_1$, in the sense that each group in $\mathcal{G}_1$ is partitioned into more groups in $\mathcal{G}_2$ (notice that a finer grouping has more groups but the converse does not always hold). It may be tempting to expect that, with the optimal tuning of clipping threshold, the finer grouping is at least as good as the other.

Somewhat surprisingly, we show that this is not the case: 
it is not true that the group-wise clipping based on $\mathcal{G}_1$ is a subset of that based on $\mathcal{G}_2$. For instance, the all-layer clipping cannot be viewed as a sub-case of the layer-wise clipping. We prove in \Cref{thm: all-layer not subset} with counter-examples that hold for both Abadi's and AUTO clipping functions.

\begin{theorem}
	\label{thm: all-layer not subset}
	Consider 2 layers of parameters. There exist per-sample gradients $\g_i, \g_j$, such that the all-layer clipping $\mathcal{G}_1=\{1,2\}$ cannot be represented as any group-wise clipping $\mathcal{G}_2=\{1\},\{2\}$: $\exists R, \forall (R_1,R_2)$, at least one of the following holds,
	$$\g_i\cdot C(\g_i;R_1,R_2)\neq \g_i\cdot C(\g_i;R)$$
	$$\g_j\cdot C(\g_j;R_1,R_2)\neq \g_j\cdot C(\g_j;R)$$
\end{theorem}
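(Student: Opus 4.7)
The plan is to reduce the matching of all-layer and layer-wise clipped gradients to a pair of consistency equations in the two unknowns $(R_1,R_2)$, and then to exhibit explicit two-layer gradients that make these equations inconsistent no matter how $R$ is chosen. The key observation is that both clipping functions under consideration apply a single scalar $c_i:=C(\g_i;R)$ to every block of $\g_i$ under all-layer clipping, whereas under layer-wise clipping the two blocks of $\g_i$ are multiplied by $C(\g_i^{(1)};R_1)$ and $C(\g_i^{(2)};R_2)$ respectively. Matching the two outputs on both samples thus yields the system
\[
C(\g_i^{(1)};R_1)=C(\g_i^{(2)};R_2)=c_i,\qquad C(\g_j^{(1)};R_1)=C(\g_j^{(2)};R_2)=c_j,
\]
and showing that it has no solution for some choice of $\g_i,\g_j$ and some $R$ proves the claim.

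For the AUTO clipping $C(g;R)=R/(\|g\|+0.01)$, writing $a_i=\|\g_i^{(1)}\|$, $b_i=\|\g_i^{(2)}\|$, $n_i=\sqrt{a_i^2+b_i^2}$ and similarly with subscript $j$, solving the first and third equations for $R_1$ yields the necessary consistency condition
\[
\frac{a_i+0.01}{n_i+0.01}=\frac{a_j+0.01}{n_j+0.01}.
\]
I would pick orthogonal per-sample gradients with $(a_i,b_i)=(1,1)$ and $(a_j,b_j)=(1,2)$, whereupon the left-hand side is $\approx 0.709$ and the right-hand side is $\approx 0.450$, a contradiction. Hence no $(R_1,R_2)$ can match all-layer clipping on both samples, regardless of $R$.

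For Abadi's clipping $C(g;R)=\min(1,R/\|g\|)$, I would first choose $R$ small enough that $R<n_i$ and $R<n_j$, which forces both $c_i$ and $c_j$ to be strictly less than $1$. Any matching $(R_1,R_2)$ must then satisfy $R_1<a_i$ and $R_1<a_j$ (otherwise one of $C(\g_i^{(1)};R_1),C(\g_j^{(1)};R_1)$ equals $1$, contradicting $c_i,c_j<1$), so the system becomes linear and reduces to the consistency condition $a_i/n_i=a_j/n_j$. The same counterexample $(1,1)$ versus $(1,2)$ violates this ($1/\sqrt{2}\ne 1/\sqrt{5}$), completing the argument.

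The main obstacle, mild as it is, is the case analysis around Abadi's nonsmooth boundary $\|g\|=R_m$, where $\min(1,R/\|g\|)$ switches from $R/\|g\|$ to $1$: one must ensure that no ``no-clipping'' assignment $R_m\ge\|\g^{(m)}\|$ sneaks past the contradiction. Picking $R$ strictly smaller than all relevant layer norms forces the active-clipping regime for every candidate $(R_1,R_2)$, reducing everything to the linear system above. No such care is required for AUTO, whose multiplier is smooth and strictly positive everywhere.
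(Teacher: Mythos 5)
Your proof is correct. For AUTO clipping it essentially reproduces the paper's appendix argument: both reduce matching to the consistency ratio $(\|\g_i^{(m)}\|+\gamma)/(\|\g_i\|+\gamma)$ being independent of $i$, which your explicit choice $(a_i,b_i)=(1,1)$ versus $(a_j,b_j)=(1,2)$ visibly violates, and does so for every $R>0$. For Abadi's clipping you take a genuinely different counterexample configuration. The paper picks one sample clipped and one not, i.e.\ $\|\g_1\|>R>\|\g_2\|$ with the reversed layer-norm ordering $\|\g_1^{(1)}\|<\|\g_2^{(1)}\|$, and runs a purely qualitative dichotomy on $R_1$ relative to $\|\g_2^{(1)}\|$: any $R_1$ large enough to leave $\g_2^{(1)}$ unclipped also leaves $\g_1^{(1)}$ unclipped, contradicting that all-layer clipping scales $\g_1^{(1)}$ down. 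You instead choose $R$ small enough that both samples are strictly clipped, note that $c_i,c_j<1$ forces $R_1<a_i$ and $R_1<a_j$ (otherwise the layer-wise factor saturates at $1$), and reduce to the ratio identity $a_i/n_i=a_j/n_j$, falsified by $1/\sqrt{2}\neq 1/\sqrt{5}$. The advantage of your route is uniformity: the identical scale-invariance obstruction handles both clipping functions, and the Abadi boundary case-analysis collapses to a single implication. The advantage of the paper's route is that it needs no arithmetic with norms; the contradiction is a bare clipped-versus-unclipped mismatch. Both are complete and establish the theorem.
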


\begin{proof}
We demonstrate with Abadi's clipping function, though AUTO clipping function can be similarly analyzed. Consider $\g_i=[3,4], \g_j=[6,0]$, and $R=4$. By all-layer clipping, $\langle 1 \rangle$ $\g_i$ is clipped, so are its components $\g_i^{(1)},\g_i^{(2)}$; $\langle 2 \rangle$ $\g_j$ is not clipped, neither are its components $\g_j^{(1)},\g_j^{(2)}$. No matter how one chooses $R_1$, it's impossible to reproduce the same clipped per-sample gradients: if $R_1<\|\g_j^{(1)}\|$, contradicting $\langle 2 \rangle$; otherwise, $R_1>\|\g_j^{(1)}\|>\|\g_i^{(1)}\|$, contradicting $\langle 1 \rangle$.
\end{proof}

In fact, we show through the extended proof in appendix that, it is almost impossible in practice that all-layer clipping is a subset of layer-wise clipping. Therefore, one cannot guarantee that the optimal layer-wise clipping (even if adaptively tuned at every iteration) has higher accuracy than the optimal all-layer clipping.

\section{Uniform v.s. non-uniform design of grouping}
\label{sec:groups}
Besides the number of groups, the design of grouping is also critical to the performance of group-wise clipping, although each group-wise clipping is equally $(\epsilon,\delta)$-DP if $\sigma_\text{DP}$ is fixed in \eqref{eq:DP optimizers}.

\begin{remark}
The grouping of trainable parameters does not affect the privacy guarantee, because the noise-to-sensitivity ratio is the same \citep{mcmahan2018learning}. \end{remark}

Next, we claim that exhausting all the grouping of layers is computationally infeasible, because the possibilities of grouping is known as the Bell number \citep{bell1938iterated}. This number grows faster than exponentially, with $10^{4.3}$ grouping of 9 layers and $10^{11.8}$ grouping of 18 layers.

Hence, we seek interesting sub-space of the grouping by investigating many factors that justify a good grouping.
Existing designs of group-wise clipping, like layer-wise and per-device clipping, are uniform in the sense that each group has roughly the same number of layers or parameters.
In \Cref{fig:groups number}, we uniformly group the transformer blocks by the common divisor \{2,3,4,6\} because ViT and RoBERTa have either 12 or 24 blocks. 

While the uniform grouping is easy to design, we explore the non-uniform grouping as a broader class that contains uniform ones as special cases, which partitions different number of layers in each group. As we will discuss in \Cref{sec:peak memory prof}, the non-uniform group-wise clipping can reach beyond the accuracy-memory trade-off of the uniform one.


\section{Peak memory profiling}
\label{sec:peak memory prof}
Different grouping has different memory profile, especially in terms of the maximum peak memory in \Cref{tab:2group}. 
As its name suggests, BK algorithm (originally proposed only with all-layer clipping style) book-keeps the output gradients across all layers, which results in a high peak memory. For GPT2 models, DP optimization with all-layer clipping incurs 30\% more memory cost than non-DP by \cite[Table 8]{bu2022differentially}, when the sentence length is long and hence the output gradients are expensive.

\begin{table}[!htb]
\centering
\caption{Accuracy and maximum peak memory of two-group clipping style. Here `boundary' means the first X attention blocks are the first group and the other ($12$-X) blocks are the second group.}
\resizebox{0.8\linewidth}{!}{
\begin{tabular}{|c|c|c|c|c|c|c||c}
\hline
   & \multicolumn{2}{c|}{CIFAR100 ViT-large}       & \multicolumn{2}{c|}{QNLI RoBERTa-base}\\ \hline
{boundary} &test accuracy & {peak memory (GB)} &{test accuracy} & {peak memory (GB)}\\ \hline
2  & 88.06 & 22.04    & 85.67 & 11.94    \\ \hline
4  & 88.27 & 21.96    & 85.92 & 11.95    \\ \hline
6(uniform) & 88.36 & 21.89    & 86.22 & 12.22    \\ \hline
8  & 88.75 & 21.82    & 86.38 & 13.40    \\ \hline
10 & 88.89 & 21.75    & 86.29 & 14.57    \\ \hline\hline
all-layer  & 89.59 & 23.52    & 86.91 & 15.75    \\ \hline\hline
non-DP  & 93.63 & 20.38    & 87.34 & 11.63    \\ \hline
\end{tabular}
}
\label{tab:2group}
\end{table}

For the group-wise clipping style, we can characterize the optimization's memory profile by the memory peaks: when back-propagation arrives at the first layer of the $m$-th group, at Line 6 in \Cref{alg:dpsgd-ghostBK}, all output gradients in this group and all activation tensors in the un-processed groups are cached in the memory. Therefore, $M$ groups lead to $M$ memory peaks, whose form in \Cref{fact:peak} is proved in appendix.

\begin{fact}
\label{fact:peak}
The $m$-th memory peak by space complexity is
$$B(\sum_{l<\mathcal{G}_m[-1]} T_l d_l+\sum_{\mathcal{G}_m[0]<r<\mathcal{G}_m[-1]}T_r p_r).$$
\end{fact}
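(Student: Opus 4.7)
My plan is to trace Algorithm~\ref{alg:dpsgd-ghostBK} step by step and account for every tensor that resides in memory at the instant the $m$-th memory peak occurs, then sum their sizes and argue that the remaining tensors are of lower order. First I would pin down the peak instant as the state reached just after Line~9 has computed the group clipping factor $C_i^{(m)}$ but before the inner loop at Lines~10--12 has started deleting activations and output gradients of group~$m$: at that moment the outer loop has already visited every layer of index $\geq \mathcal{G}_m[0]$, so all output gradients for layers in group~$m$ are available, and since deletions happen exclusively inside the inner loop, no tensor stored for group~$m$ has yet been released, which makes the memory footprint locally maximal over the processing of group~$m$.

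Next I would enumerate the live tensors by kind. Activations $\a_{(l)} \in \R^{B \times T_l \times d_l}$ are written during the forward pass and only released when the inner loop of the group containing $l$ runs; thus activations of layers belonging to groups already completed during backward (those with larger indices than~$m$ in the forward direction) are gone, while those belonging to groups $1,\ldots,m$ survive, yielding the contribution $B \sum_{l<\mathcal{G}_m[-1]} T_l d_l$ after I account for the $\a_{(\mathcal{G}_m[-1])}$ tensor having just been consumed in the per-sample norm computation at Line~6. Output gradients $\partial L / \partial \s_{(r)} \in \R^{B \times T_r \times p_r}$ are created at Line~5 and destroyed at Line~12; the ones still alive at the peak are precisely those whose layer $r$ lies inside group~$m$ and has already been back-propagated during the descent from $\mathcal{G}_m[-1]$ toward $\mathcal{G}_m[0]$, which, after excluding the two boundary tensors that are consumed while assembling the group clipping factor, gives the contribution $B \sum_{\mathcal{G}_m[0]<r<\mathcal{G}_m[-1]} T_r p_r$. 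The remaining quantities, namely the parameter matrices $\W_{(l)} \in \R^{d_l \times p_l}$, the already-accumulated clipped gradients $\G_r$, the per-sample norm scalars, and the clipping vector $C_i^{(m)} \in \R^B$, all lack the $T_l$ factor and carry no $B \cdot T$ dependence, so they are absorbed into lower-order terms of the space profile.

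Summing the two dominant contributions then yields exactly the expression in the statement, and the overall peak memory of DP optimization is the maximum of this quantity over~$m$, which is what Section~\ref{sec:peak memory prof} needs in order to compare uniform and non-uniform groupings. The hardest part of executing this plan will be the endpoint accounting that produces the strict inequalities in the two sums: it requires a careful order-of-operations argument showing that by the time the clipping factor $C_i^{(m)}$ is formed at Line~9, the boundary tensors indexed by $\mathcal{G}_m[0]$ and $\mathcal{G}_m[-1]$ have already been consumed in the per-sample norm computation and released, rather than any deeper analytic claim; once this bookkeeping is done, the equality of the sums is immediate.
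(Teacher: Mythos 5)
Your basic strategy---pin down the instant of the $m$-th peak inside Algorithm~\ref{alg:dpsgd-ghostBK}, enumerate the tensors alive at that instant (activations and output gradients), and discard weights, accumulated clipped gradients, and per-sample norms as $T$-free lower-order terms---is exactly the book-keeping argument this Fact calls for, and your identification of the two dominant sums (activations of all layers up to the last layer of group~$m$, plus the book-kept output gradients inside group~$m$) is correct in spirit.

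The weak point is the endpoint accounting, and you flag it yourself as the hardest part, but the justification you give does not actually follow from Algorithm~\ref{alg:dpsgd-ghostBK} as written. You obtain the strict inequalities by asserting that $\a_{(\mathcal{G}_m[-1])}$ is ``consumed'' at Line~6 and that the two boundary output gradients are ``consumed while assembling the clipping factor'' at Lines~8--9. But the pseudo-code frees no tensor at Lines~6, 8, or 9; deletion occurs only at Line~12, \emph{after} the entire inner loop over $r\in\mathcal{G}_m$ has produced every clipped gradient $\G_r$. A literal walkthrough therefore leaves $\a_{(\mathcal{G}_m[-1])}$ and all of $\partial L/\partial\s_{(\mathcal{G}_m[0])},\ldots,\partial L/\partial\s_{(\mathcal{G}_m[-1])}$ alive at the moment $C_i^{(m)}$ is formed, which gives the inclusive sums $\sum_{l\leq\mathcal{G}_m[-1]}T_l d_l$ and $\sum_{\mathcal{G}_m[0]\leq r\leq\mathcal{G}_m[-1]}T_r p_r$, not the strict ones in the statement. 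To close this you would either need to appeal to an implementation detail not visible in the pseudo-code (eager deallocation of the current layer's $\a$ and $\partial L/\partial\s$ once the per-sample norm and the next-layer output gradient have been formed), or explicitly note that the inclusive/strict distinction changes the count by a single layer's $O(BTp)$ and is immaterial to the complexity claim. As written, ``consumed at Line~6/Lines~8--9'' is asserted rather than derived, so the endpoint step is a gap even though the rest of the tensor census is sound.
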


Hence, we define the maximum memory peak as
\begin{align}
B\max_m\{\sum_{l<\mathcal{G}_m[-1]} T_l d_l+\sum_{\mathcal{G}_m[0]<r<\mathcal{G}_m[-1]}T_r p_r\}
\label{eq:max peak}
\end{align}
which negatively determines the maximum (virtual) batch size\footnote{Virtual batch size is the number of samples sent to computing devices, which is necessary for gradient accumulation and distributed learning. It only affects the training efficiency but not the accuracy, as the latter is determined by the logical batch size.}, and thus the maximum throughput (i.e. training speed) of the DP optimization.

\subsection{Maximum peak memory of uniform grouping}

\begin{wrapfigure}{r}{0.39\linewidth}
    \vspace{-0.7cm}
    \centering
    \includegraphics[width=\linewidth]{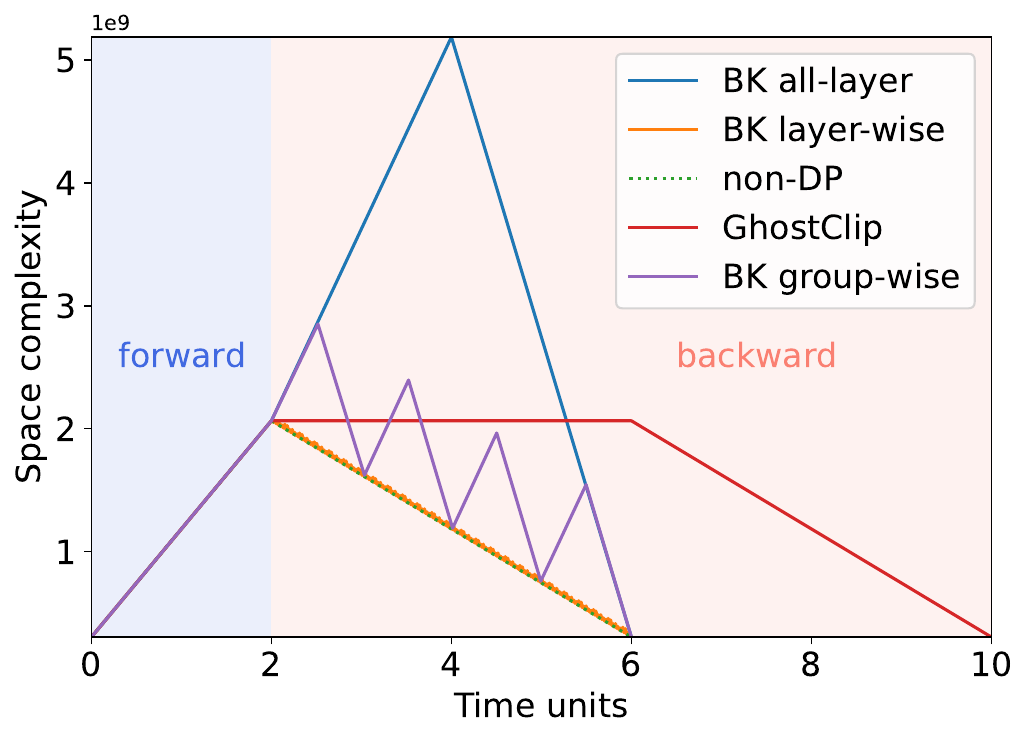}
    \vspace{-0.7cm}
    \caption{Space complexity of forward and backward propagation, on top of the memory occupied by data and optimizer states. We use ViT-large-patch16-224 with batch size 32.
    }
    \label{fig:different DP optimization}
\vspace{-0.3cm}
\end{wrapfigure}

For uniform grouping, the maximum memory peak is always that of the bottom group (the firstly processed during back-propagation), i.e. $m=M$. We visualize the memory peaks of in \Cref{fig:different DP optimization}, where the group-wise clipping is $M=4$, and the layer-wise clipping is $M=147$.

We highlight that the maximum peak memory of all-layer clipping occurs when the back-propagation reaches the top layer since all output gradients are book-kept. For layer-wise clipping, the maximum peak memory is similar to that of non-DP training (see also \Cref{fig:groups number}), whose peak memory occurs when the back-propagation just starts. 

Generally speaking, for uniform grouping, the peak memory increases with smaller number of groups $M$, though the throughput is not affected under BK algorithm (see \Cref{fig:blockwise fix bs}, explained by \Cref{sec:algo}).
\begin{figure}[!htb]
    \vspace{-0.2cm}
    \centering
    \includegraphics[width=0.8\linewidth]{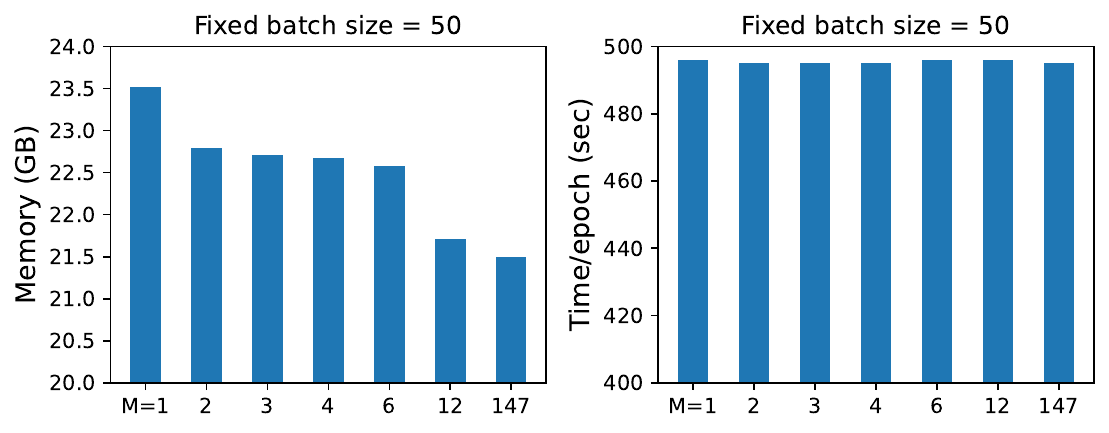}
    \vspace{-0.3cm}
    \caption{Peak memory and throughput of ViT-large-patch16-224 using fixed batch size.}
    \label{fig:blockwise fix bs}
    \vspace{-0.3cm}
\end{figure}

\subsection{Maximum peak memory of non-uniform grouping}
The maximum peak memory of non-uniform grouping can still be described by \eqref{eq:max peak}, but not as explicitly as the uniform grouping. For example, with the two-group clipping style, the maximum peak memory may be the first peak or the second one. This explains the non-monotone pattern in \Cref{tab:2group}, and motivates to group layers so that the two memory peaks are similar.

Given that the non-uniform grouping contains the uniform grouping as special cases, it usually breaks the accuracy-memory trade-off of the uniform grouping, see \Cref{fig:nonuniform breaks}.
\begin{figure}[!htbp]
    \vspace{-0.3cm}
    \centering
    \includegraphics[width=0.8\linewidth]{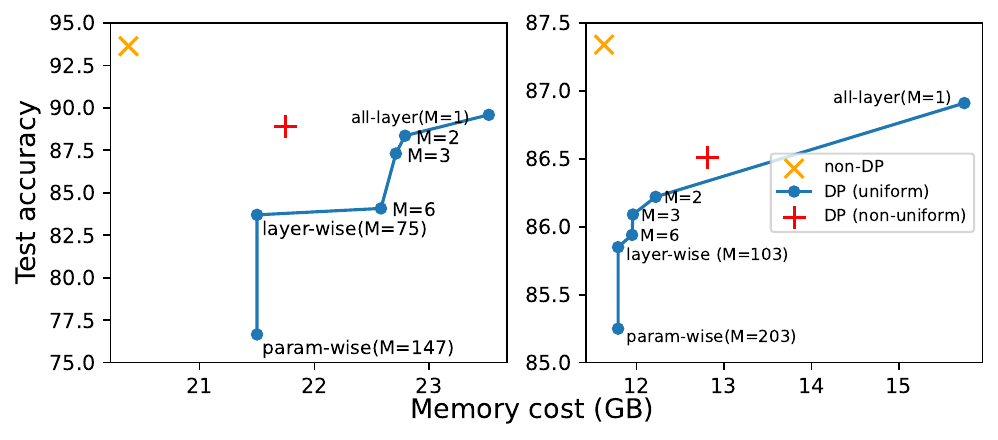}
    \vspace{-0.3cm}
    \caption{Accuracy and memory on CIFAR100 (ViT-large) and QNLI (RoBERTa-base).}
    \label{fig:nonuniform breaks}
    \vspace{-0.3cm}
\end{figure}

\section{Experiments}
\label{sec:experiments}
We experiment the group-wise clipping style on multiple vision and language tasks, in \Cref{tab:vision}, \Cref{tab:E2E GPT selected} and \Cref{tab:roberta} (in appendix). We focus on the uniform grouping, and cover ViT \cite{dosovitskiy2020image} / RoBERTa \cite{liu2019roberta} / GPT2 \cite{radford2019language} models. Empirically speaking, more groups indicate worse accuracy than the all-layer clipping, where the gap decreases as the privacy budget and the model capacity increases. For example, the accuracy gap between layer-wise and all-layer clipping drops from 10\% ($\epsilon=2$) to 5\% ($\epsilon=8$), when training ViT-small on CIFAR100, and further drops to 2\% when training ViT-large. We observe the similar patterns on the text datasets with RoBERTa and GPT2, in which the performance of layer-wise clipping is comparable to that of all-layer clipping. Specially, our results are comparable with the adaptive layer-wise clipping (denoted as $*$ in \Cref{tab:roberta} and \Cref{tab:E2E GPT selected}) \cite{anonymous2023exploring}, even though they trained 20 epochs on SST2 but we only train 3 epochs. 

\begin{table*}[!htb]
    \centering
\setlength\tabcolsep{2pt}
\caption{Test accuracy of image classification tasks under group-wise clipping styles.}
\resizebox{\linewidth}{!}{    
\begin{tabular}{|c|c|ccccc|ccccc|}
\hline
\multirow{2}{*}{Model}&\multirow {2}{*}{Method}&\multicolumn {5}{c|}{$\epsilon=2$}& \multicolumn{5}{c|}{$\epsilon=8$}
\\\cline{3-12}
&&CIFAR10&CIFAR100&SVHN&GTSRB&Food101&CIFAR10&CIFAR100&SVHN&GTSRB&Food101\\\hline
\multirow{8}{*}{\shortstack{ViT\\-small}}& all-layer $(M=1)$&96.94&82.58&91.28&89.57&73.45&97.13&85.00&92.92&94.55&77.12
\\
&non-uniform $(M=2)$&97.01&82.68&90.77&89.70&74.45&97.11&85.23&92.24&94.77&78.14
\\
&uniform $(M=2)$&96.87&81.79&91.39&89.55&72.48&97.00&84.45&92.98&94.59&76.68
\\
&uniform $(M=3)$&96.81&80.96&91.02&89.55&71.97&96.99&84.36&92.71&94.70&76.37
\\
&uniform $(M=4)$&96.81&80.45&90.95&89.70&71.47&96.96&84.09&92.52&94.66&76.14
\\
&uniform $(M=6)$&96.72&79.24&90.96&89.89&70.88&96.93&83.49&92.44&94.67&75.76
\\
&layer-wise $(M=75)$&96.60&71.93&90.42& 87.18&65.21&96.86&80.49&91.87&93.88&71.96
\\
&param-wise $(M=150)$&96.34&63.24&89.04&80.77 &59.42&96.71&75.09&90.90&91.91&67.85
\\\hline\hline

\multirow{8}{*}{\shortstack{ViT\\-large}}& all-layer $(M=1)$&98.68&89.59&93.27&91.81&82.29&98.92&90.66&94.26&95.68&84.84
\\
&non-uniform $(M=2)$&98.60&88.89&93.14&91.61&81.46&98.90&90.36&94.16&95.70&84.39
\\
&uniform $(M=2)$&98.52&88.36&92.77&90.89&81.13&98.69&90.36&93.79&95.27&84.05
\\
&uniform $(M=3)$&98.59&87.31&92.42&90.31&79.68&98.71&89.95&93.59&95.17&83.33
\\
&uniform $(M=4)$&98.51&86.38&92.31&89.97&78.46&98.70&89.48&93.34&94.88&82.61
\\
&uniform $(M=6)$&98.56&84.08&92.16&88.99&76.66&98.66&88.89&93.19&94.73&81.59
\\
&layer-wise $(M=147)$&98.37&83.70&92.61&89.87&77.89&98.57&88.65&93.79&94.62&82.68
\\
&param-wise $(M=294)$&98.24&76.66&91.48&85.28&72.59&98.47&86.43&93.14&92.73&79.61
\\\hline
\end{tabular}
}
\label{tab:vision}
\end{table*}

\begin{table*}[!htb]
\setlength\tabcolsep{2pt}
\centering
\caption{Test score of text generation on E2E dataset under group-wise clipping styles.}
\resizebox{\linewidth}{!}{ 
\begin{tabular}{|c|c|c|ccccc|ccccc|}
\hline
\multirow{2}{*}{Model}&\multicolumn{2}{c|}{\multirow {2}{*}{Method}}&\multicolumn {5}{c}{$\epsilon=3$}& \multicolumn{5}{c|}{$\epsilon=8$}
\\\cline{4-13}
&\multicolumn{2}{c|}{}& BLEU  &  ROGUE-L &  NIST  &  METEOR &  CIDEr &  BLE  &  ROGUE-L &  NIST  &  METEOR & CIDEr \\ \hline
\multirow{7}{*}{\shortstack{GPT2\\-small}}&RGP & \citep{yu2021large} & 58.48 & 65.56 & 5.775 & 0.331  & 1.300                      & 58.46 & 65.03   & 6.276 & 0.349  & 1.496                      \\
&all-layer $(M=1)$    & \cite{li2021large}            & 61.52 & 65.67   & 6.697 & 0.384  & 1.761                      & 63.19 & 66.43   & 7.444 & 0.400  & 1.919                      \\
&all-layer $(M=1)$    & \cite{bu2022automatic}        & 61.34 & 65.87   & 7.071 & 0.387  & 1.801                      & 63.60 & 67.07   & 7.714 & 0.404  & 1.938                      \\
                               & block-wise $(M=12)$  & ours                          & 61.03 & 66.07   & 6.863 & 0.388  & 1.787                      & 63.65 & 67.36   & 7.773 & 0.406  & 1.951                      \\ 
                               & layer-wise $(M=76)$  & ours                          & 60.76 & 65.93   & 6.680 & 0.386  & 1.766                      & 63.47 & 67.49   & 7.791 & 0.407  & 1.975                      \\ 
                               & layer-wise $(M=76)$  & \cite{anonymous2023exploring} & 61.10 & 65.12   & -     & -      & -                          & 63.42 & 66.69   & -     & -      & -                          \\ 
                               & param-wise $(M=149)$ & ours                          & 57.97 & 64.84   & 6.002 & 0.372  & 1.624                      & 62.07 & 66.27   & 7.197 & 0.393  & 1.848                      \\ \hline
\multirow{4}{*}{\shortstack{ GPT2\\-medium}}& all-layer $(M=1)$    & \cite{bu2022automatic}        & 63.85 & 67.07   & 7.106 & 0.387  & 1.754                      & 64.22 & 67.53   & 8.172 & 0.418  & 2.081                      \\ 
                               & block-wise $(M=24)$  & ours                          & 61.43 & 66.93   & 7.998 & 0.411  & 2.009                      & 64.07 & 68.18   & 8.332 & 0.429  & 2.230                      \\ 
                               & layer-wise $(M=148)$ & ours                          & 61.80 & 66.76   & 7.865 & 0.407  & 1.974                      & 63.96 & 68.44   & 8.325 & 0.429  & 2.237                      \\ 
                               & param-wise $(M=293)$ & ours                          & 60.48 & 64.95   & 6.981 & 0.391  & 1.804                      & 62.50 & 67.28   & 8.178 & 0.419  & 2.098                      \\ \hline
\multirow{4}{*}{\shortstack{ GPT2\\-large}}& all-layer $(M=1)$    & \cite{bu2022automatic}        & 64.18 & 67.86   & 7.937 & 0.403  & 2.008                      & 64.64 & 68.97   & 8.301 & 0.420  & 2.163                      \\ 
                               & block-wise $(M=36)$  & ours                          & 65.23 & 69.04   & 8.467 & 0.435  & 2.234                      & 66.90 & 69.87   & 8.548 & 0.444  & 2.355                      \\ 
                               & layer-wise $(M=220)$ & ours                          & 64.86 & 68.30   & 8.417 & 0.431  & 2.219                      & 66.44 & 69.55   & 8.504 & 0.443  & 2.297                      \\ 
                               & param-wise $(M=437)$ & ours                          & 63.85 & 67.83   & 8.303 & 0.416  & 2.093                      & 65.01 & 68.79   & 8.429 & 0.435  & 2.238                      \\ \hline
\end{tabular}
}
\label{tab:E2E GPT selected}
\end{table*}



\vspace{-0.4cm}
\section{Discussion}
We show that group-wise clipping, a superset that covers existing clipping styles, leads to different accuracy and efficiency depending on the grouping of trainable parameters. For accuracy, a small number of groups (e.g. the all-layer clipping) benefits the convergence, though the accuracy gap among different group-wise clippings is negligible for larger models. For time efficiency, all group-wise clippings are equally fast under the most efficient BK algorithm. For memory efficiency, the uniform group-wise clipping with more groups has smaller peak memory and thus form an accuracy-memory trade-off. However, the non-uniform grouping can reach beyond this trade-off with a careful design. 

Overall, a proper group-wise clipping style makes system design easy and allows large DP models to be accurate, fast to train, and memory-efficient. This finally closes the accuracy and efficiency gap between DP and standard non-DP models, thus allowing us to establish new state-of-the-art results on multiple datasets, without relying on adaptive clipping or longer training epochs. For future work, more exploration of the grouping is desirable, especially in the orthogonal direction of parameter-efficient fine-tuning \cite{yu2021differentially,mehta2022large,bu2022differentiallyB}.

\bibliographystyle{plain}
\bibliography{reference}
\newpage
\appendix
\section{Proofs}
\label{app:proofs}
\subsection{Proof of \Cref{thm: all-layer not subset}}
\begin{proof}\textbf{(Proof for Abadi's clipping)}
Consider a two-layer neural network and two per-sample gradients $\g_1=[\g_1^{(1)},\g_1^{(2)}], \g_2=[\g_2^{(1)},\g_2^{(2)}]$. Suppose $\|\g_1^{(1)}\|<\|\g_2^{(1)}\|$ and $\|\g_1\|>R>\|\g_2\|$. Then in the all-layer clipping, the first per-sample gradient $\g_1$ is clipped/scaled, and so are both its components $\g_1^{(1)},\g_1^{(2)}$. But the second per-sample gradient $\g_2$ is not clipped. However, in the layer-wise clipping with any choice of $\bm R=(R_1,R_2)$, there are two cases both leading to the contradiction.
\begin{enumerate}
    \item $R_1<\|\g_2^{(1)}\|$. Then $\g_2^{(1)}$ is clipped by layer-wise. But $\g_2^{(1)}$ is not clipped by all-layer. Done.
    \item $R_1>\|\g_2^{(1)}\|$. Then $\g_2^{(1)}$ is not clipped by layer-wise, and so is not $\g_1^{(1)}$ since $\|\g_1^{(1)}\|<\|\g_2^{(1)}\|$. But $\g_1^{(1)}$ is clipped by all-layer. Done.
\end{enumerate}
In fact, we can generalize this non-equivalence between the all-layer and layer-wise clipping: for small $R$ (say $R<\|\g_i\|$), a necessary (but impossible) condition to claim that, the all-layer clipping is a sub-case of the layer-wise clipping, would be
$$\frac{\|\g_i^{(m)}\|}{\|\g_i\|}=\frac{\|\g_j^{(m)}\|}{\|\g_j\|}, \forall i,j\in[B], m\in[M].$$
Here $B$ is the batch size and $M$ is the number of groups.

\textbf{(Proof for automatic clipping)}
Here we work with the $R$-dependent automatic clipping, which is mathematically equivalent to the automatic clipping in \eqref{eq:auto group priv grad}, according to Theorem 1\& 2 in \cite{bu2022automatic}. We consider a $M$-group neural network and per-sample gradients $\g_i$. Suppose the layer-wise clipping can represent the all-layer clipping with $R$, then
$$\frac{R_m\g_i^{(m)}}{\|\g_i^{(m)}\|+0.01}=\frac{R\g_i^{(m)}}{\|\g_i\|+0.01}, \forall i\in[B],m\in[M].$$
This requires that 
$$\frac{\|\g_i^{(m)}\|+0.01}{\|\g_i\|+0.01}=\frac{\|\g_j^{(m)}\|+0.01}{\|\g_j\|+0.01}, \forall i,j\in[B],m\in[M],$$
which is practically impossible to hold.
\end{proof}

\subsection{Proof of \Cref{thm: convergence DPSGD AUTO}: DP-SGD}
\begin{proof}

Consider DP-SGD with the automatic (AUTO-S \cite{bu2022automatic}) clipping in a layer-wise style, i.e. $\gamma=0.01$. $$\w_{t+1}^{(m)}=\w_{t}^{(m)}-\eta \left(\frac{\sum_i  \g^{(m)}_{t,i}/(\| \g^{(m)}_{t,i}\|+\gamma)}{\sqrt{\ngroup}}+ \sigma \mathcal{N}(0,\I)\right)$$
where $ \g^{(m)}_{t,i}$ is i.i.d. unbiased estimate of $\g^{(m)}_{t}$, with a bounded variance as described in \Cref{assumption: tilde g}.

By the Lipschitz smoothness in \Cref{assumption: Lipschitz},
\begin{align*}
&L_{t+1}-L_t
\leq \sum_m \left[\g^{(m)\top}_{t} (\w_{t+1}^{(m)}-\w^{(m)}_{t})\right]+\frac{\mathcal{L}}{2}\sum_m\|\w^{(m)}_{t+1}-\w^{(m)}_{t}\|^2
\\
&=-\eta \sum_m\left[\g^{(m)\top}_{t}\left(\sum_i\frac{ \g^{(m)}_{t,i}}{ \sqrt{\ngroup}(\| \g^{(m)}_{t,i}\|+\gamma)}+\sigma \mathcal{N}(0,I)\right)\right]+\frac{\mathcal{L}\eta^2}{2}\sum_m\left\|\sum_i \frac{ \g^{(m)}_{t,i}}{\sqrt{\ngroup}(\| \g^{(m)}_{t,i}\|+\gamma )}+ \sigma \cdot \mathcal{N}(0,\I)\right\|^2
\end{align*}

Given the fact that $\left\|\frac{ \g^{(m)}_{t,i}}{\| \g^{(m)}_{t,i}\|+\gamma}\right\|\leq 1$, we expand the square of norm and the expected improvement at one iteration is
\begin{align}
\begin{split}
\E (L_{t+1}-L_t|\w_t)
&\leq -\frac{\eta}{\sqrt{\ngroup}} \sum_m
\g^{(m)\top}_{t}\E\left(\sum_i \frac{ \g_{t, i}^{(m)}}{\| \g_{t, i}^{(m)}\|+\gamma}\right) + \frac{\mathcal{L}\eta^2}{2} \sum_m \left (\frac{1}{\ngroup}\E \left\|\sum_i \frac{\g_{t, i}^{(m)}}{||\g_{t, i}^{(m)} || + \gamma} \right\|^2 + \sigma^2 d^{(m)} \right) \\
&\leq -\frac{B\eta}{\sqrt{\ngroup}} \sum_m
\g^{(m)\top}_{t}\E\left(\frac{ \g_{t, i}^{(m)}}{\| \g_{t, i}^{(m)}\|+\gamma}\right) + \frac{\mathcal{L}\eta^2}{2} \sum_m \left (\frac{B^2}{\ngroup} + \sigma^2 d^{(m)} \right)
\end{split}
\label{eq:expected improvement}
\end{align}
in which $d^{(m)}$ is the number of parameters in the $m$-th group and $d=\sum_m d^{(m)}$ is the total number of model parameters.

Now we can lower bound $\g_{t}^{(m)\top}\E\left(\frac{ \g^{(m)}_{t,i}}{\| \g^{(m)}_{t,i}\|+\gamma}\right)$ in \eqref{eq:expected improvement} by \Cref{lem: M explicit form}.
\begin{lemma}
\label{lem: M explicit form}
Denoting $\|\g^{(m)}_{t}\|-\frac{\xi}{r\sqrt{\ngroup}}$ as $x_r$, then for any $r>1$ we have
\begin{align}
\g_{t}^{(m)\top}\E\left(\frac{ \g^{(m)}_{t,i}}{\| \g^{(m)}_{t,i}\|+\gamma}\right)\geq \frac{1}{2}\cdot \underbrace{x_r\left(\frac{\gamma}{(r-1)(x_r+\frac{\xi}{r\sqrt{\ngroup}})+\gamma}-\frac{\gamma}{(r+1)(x_r+\frac{\xi}{r\sqrt{\ngroup}})+\gamma}\right)}_{\M(x_r;r,\xi,\gamma)}
\label{eq:M explicit}
\end{align}
Here $\M$ is non-negative and strictly increasing, with $\M(0)=0$. Thus $\M$ can be viewed as a distance measure.
\end{lemma}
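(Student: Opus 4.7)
The plan is to combine the symmetry of the per-sample gradient noise (\Cref{assumption: tilde g}) with a Chebyshev-type concentration bound indexed by $r$. Writing $\bm{\epsilon} := \g_{t,i}^{(m)} - \g_t^{(m)}$, the distributional identity $\bm{\epsilon} \overset{\mathcal{D}}{=} -\bm{\epsilon}$ yields the symmetrization
\begin{align*}
\g_t^{(m)\top}\mathbb{E}\left[\frac{\g_{t,i}^{(m)}}{\|\g_{t,i}^{(m)}\|+\gamma}\right] = \frac{1}{2}\mathbb{E}\left[\frac{\|\g_t^{(m)}\|^2 + \g_t^{(m)\top}\bm{\epsilon}}{\|\g_t^{(m)}+\bm{\epsilon}\|+\gamma}+\frac{\|\g_t^{(m)}\|^2 - \g_t^{(m)\top}\bm{\epsilon}}{\|\g_t^{(m)}-\bm{\epsilon}\|+\gamma}\right].
\end{align*}
The paired numerators sum to $2\|\g_t^{(m)}\|^2$, and combining over a common denominator together with the identity $\|\g_t^{(m)}+\bm{\epsilon}\|^2-\|\g_t^{(m)}-\bm{\epsilon}\|^2 = 4\g_t^{(m)\top}\bm{\epsilon}$ reduces the integrand to a form depending on $\bm{\epsilon}$ only through $\|\g_t^{(m)}\pm\bm{\epsilon}\|$ and $\|\bm{\epsilon}\|^2$.

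Next I would condition on the good event $A_r := \{\|\bm{\epsilon}\| \leq r\xi/\sqrt{\ngroup}\}$; applying Markov to $\|\bm{\epsilon}\|^2$ with \Cref{assumption: tilde g} gives $\mathbb{P}(A_r) \geq 1 - 1/r^2$. On $A_r$ the triangle inequality yields $\|\g_t^{(m)} \pm \bm{\epsilon}\| \in [\|\g_t^{(m)}\| - r\xi/\sqrt{\ngroup},\, \|\g_t^{(m)}\| + r\xi/\sqrt{\ngroup}]$, and the definition $\|\g_t^{(m)}\| = x_r + \xi/(r\sqrt{\ngroup})$ rewrites these endpoints as exactly $(r-1)\|\g_t^{(m)}\|$ and $(r+1)\|\g_t^{(m)}\|$, which are the factors in the denominators of $\M$. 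Plugging these extremal estimates into the symmetrized expression and applying the telescoping identity $\frac{\gamma}{a+\gamma} - \frac{\gamma}{b+\gamma} = \frac{\gamma(b-a)}{(a+\gamma)(b+\gamma)}$ produces the target $\tfrac{1}{2}\M(x_r; r, \xi, \gamma)$. The contribution from the tail $A_r^c$ is controlled using $|\g_t^{(m)\top}\g_{t,i}^{(m)}/(\|\g_{t,i}^{(m)}\|+\gamma)| \leq \|\g_t^{(m)}\|$ and the $O(1/r^2)$ mass of $A_r^c$, which is absorbed into the factor $\tfrac{1}{2}$ of the final bound.

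For the stated properties of $\M$, combining the two fractions yields the closed form $\M(x_r; r, \xi, \gamma) = \frac{2x_r\gamma\|\g_t^{(m)}\|}{((r-1)\|\g_t^{(m)}\|+\gamma)((r+1)\|\g_t^{(m)}\|+\gamma)}$ with $\|\g_t^{(m)}\| = x_r + \xi/(r\sqrt{\ngroup})$: non-negativity and $\M(0)=0$ are immediate from $x_r \geq 0$, and strict monotonicity in $x_r$ follows by direct differentiation since both $x_r$ and $\|\g_t^{(m)}\|$ are increasing in $x_r$ while the denominator grows only as $x_r^2$ to leading order. The principal obstacle is the bookkeeping in the symmetrization step: the cross term $\g_t^{(m)\top}\bm{\epsilon}\,(\|\g_t^{(m)}-\bm{\epsilon}\| - \|\g_t^{(m)}+\bm{\epsilon}\|)$ has indeterminate sign, and closing the bound cleanly requires bounding its magnitude by $\|\g_t^{(m)}\|\|\bm{\epsilon}\|^2$ via Cauchy-Schwarz plus a triangle-inequality estimate of the difference in norms, then absorbing it using $\|\bm{\epsilon}\| \leq r\xi/\sqrt{\ngroup}$ on $A_r$ -- this is precisely the mechanism by which the $\xi/(r\sqrt{\ngroup})$ shift encoded in $x_r$ enters the final lower bound.
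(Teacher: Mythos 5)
Your overall framework (pair $\bm{\epsilon}$ with $-\bm{\epsilon}$ via the symmetry assumption, introduce a Chebyshev good event indexed by $r$, apply the triangle inequality to $\|\g_t^{(m)} \pm \bm{\epsilon}\|$) is the right spirit, but a central algebraic claim in your argument is false, and that claim is precisely what you use to make the factors $(r-1)\|\g_t^{(m)}\|$ and $(r+1)\|\g_t^{(m)}\|$ appear.

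You write that on $A_r$ the endpoints $\|\g_t^{(m)}\| \mp r\xi/\sqrt{\ngroup}$ "rewrite as exactly $(r-1)\|\g_t^{(m)}\|$ and $(r+1)\|\g_t^{(m)}\|$" using $\|\g_t^{(m)}\| = x_r + \xi/(r\sqrt{\ngroup})$. This is not an identity: the upper endpoint matching $\|\g_t^{(m)}\| + r\xi/\sqrt{\ngroup} = (r+1)\|\g_t^{(m)}\|$ would require $\xi/\sqrt{\ngroup} = \|\g_t^{(m)}\|$, and the lower endpoint matching would require the incompatible $(2-r)\|\g_t^{(m)}\| = r\xi/\sqrt{\ngroup}$. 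Concretely, with $\|\g_t^{(m)}\| = 10$, $\xi/\sqrt{\ngroup} = 2$, $r = 3$, the endpoints are $4$ and $16$ while $(r\mp1)\|\g_t^{(m)}\|$ are $20$ and $40$. Note that $(r\mp1)(x_r + \xi/(r\sqrt{\ngroup})) = (r\mp1)\|\g_t^{(m)}\|$ are really $r\|\g_t^{(m)}\| \mp \|\g_t^{(m)}\|$, i.e., $\|\g_t^{(m)}\|$ scaled by $r\mp1$ — they are not of the form $\|\g_t^{(m)}\| \pm$ (Chebyshev radius). Since this step is what is supposed to produce the denominators of $\M$, the derivation does not close. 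A related gap: you say the tail contribution on $A_r^c$ is "absorbed into the factor $\tfrac{1}{2}$," but that factor already comes from the symmetrization; the tail can contribute as negatively as $-\|\g_t^{(m)}\|\,\P(A_r^c) = O(\|\g_t^{(m)}\|/r^2)$, and you give no comparison of this against the good-event term, so the bound is not actually established.

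For reference, the paper states this Lemma inside the proof of \Cref{thm: convergence DPSGD AUTO} but does not prove it there; it is the $\xi \mapsto \xi/\sqrt{\ngroup}$ instance of the corresponding lemma in the automatic clipping paper \cite{bu2022automatic}, to which the appendix defers (see the proof of \Cref{lem:M inverse and asymptotic}). Your stated closed form of $\M$ and the monotonicity/non-negativity conclusions are correct, though the reason you give for monotonicity ("the denominator grows only as $x_r^2$ to leading order") is not a proof since the numerator also grows as $x_r^2$; one has to differentiate and check the sign directly.
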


Using this lower bound, the expected improvement \eqref{eq:expected improvement} becomes
\begin{align*}
\E (L_{t+1}-L_t|\w_t)
&\leq -\frac{B\eta  }{2\sqrt{\ngroup}}\sum_m\left[ \M(\|\g^{(m)}_{t}\|-\frac{\xi}{r\sqrt{\ngroup}})\right]+\frac{\mathcal{L}\eta^2}{2}\left(B^2 + \sigma^2 d \right)
\end{align*}

Now extend the expectation over randomness in the trajectory, and perform a telescoping sum over the iterations
\begin{align*}
L_0-L_*
&\geq L_0-\E L_T=\sum_t \E(L_t-L_{t+1})
\\
&\geq \frac{B\eta }{2\sqrt{\ngroup}}\E\left(\sum_{t,m}\M(\|\g^{(m)}_{t}\|-\frac{\xi}{r\sqrt{\ngroup}})\right)-\frac{T\mathcal{L}\eta^2}{2}\left(B^2 + \sigma^2 d \right)
\end{align*}

Substituting $\eta B =\eta_\text{0}/\sqrt{T}$ where $\eta_\text{0}$ is a base learning rate, we have
\begin{align*}
2(L_0-L_*)
\geq \sqrt{\frac{T}{M}}\eta_\text{0} \E\left(\frac{1}{T}\sum_{t,m}\M(\|\g^{(m)}_{t}\|-\frac{\xi}{r\sqrt{\ngroup}})\right)- \mathcal{L}\eta^2_\text{0}\left(1+\frac{\sigma^2 d}{B^2} \right)
\end{align*}
and finally
\begin{align}
\E\left(\frac{1}{T}\sum_{t,m}\M(\|\g^{(m)}_{t}\|-\frac{\xi}{r\sqrt{\ngroup}})\right)
\leq \sqrt{\frac{M}{T}}\left[\frac{2(L_0-L_*)}{\eta_\text{0}}
+\mathcal{L}\eta_\text{0} \left(1+\frac{\sigma^2 d}{B^2} \right)\right]
\label{eq:DP-SGD hyperbola lr}
\end{align}
With $\eta_\text{0}$ chosen properly as $\sqrt{\frac{2(L_0 - L_*)}{\mathcal{L}(1+\frac{\sigma^2d}{B^2})}}$, the hyperbola on the right hand side in \eqref{eq:DP-SGD hyperbola lr} is minimized to $2\sqrt{\frac{M}{T}}\sqrt{2(L_0-L_*)\mathcal{L}\left(1+\frac{\sigma^2 d}{B^2} \right)}$.

Since the minimum of a sequence is smaller than the average, we have
\begin{align}
\min_t \E(\sum_m\M(\|\g^{(m)}_{t}\|-\frac{\xi}{r\sqrt{\ngroup}}))
\leq 2\sqrt{\frac{M}{T}}\sqrt{2(L_0-L_*)\mathcal{L}\left(1+\frac{\sigma^2 d}{B^2} \right)}
\label{eq:ready concave}
\end{align}
Then by the Markov's inequality (since $\M$ is non-negative), for any constant $a>0$,
\begin{align}
\min_t \sum_m \P(\M(\|\g^{(m)}_{t}\|-\frac{\xi}{r\sqrt{\ngroup}})>a)
\leq \frac{2}{a}\sqrt{\frac{M}{T}}\sqrt{2(L_0-L_*)\mathcal{L}\left(1+\frac{\sigma^2 d}{B^2} \right)}
\label{eq:markov}
\end{align}

Note that 
$$\sum_m \P\left(\M(\|\g^{(m)}_{t}\|-\frac{\xi}{r\sqrt{\ngroup}})>a\right)>1-\P\left(\bigcap_m\M(\|\g^{(m)}_{t}\|-\frac{\xi}{r\sqrt{\ngroup}})<a\right)$$
which leads \Cref{eq:markov} to
\begin{align}
\max_t \P\left(\bigcap_m\M(\|\g^{(m)}_{t}\|-\frac{\xi}{r\sqrt{\ngroup}})<a\right)
\geq 1-\frac{2\sqrt{M}}{a\sqrt{T}}\sqrt{2(L_0-L_*)\mathcal{L}\left(1+\frac{\sigma^2 d}{B^2} \right)}
\label{eq:intersection after markov}
\end{align}
Denoting the inverse function of $\M$ as $\M^{-1}$, whose explicit formula will be given in \Cref{lem:M inverse and asymptotic}, we get
\begin{align}
\max_t \P\left(\bigcap_m\|\g^{(m)}_{t}\|^2<\big(\M^{-1}(a)+\frac{\xi}{r\sqrt{\ngroup}}\big)^2\right)
\geq 1-\frac{2\sqrt{M}}{a\sqrt{T}}\sqrt{2(L_0-L_*)\mathcal{L}\left(1+\frac{\sigma^2 d}{B^2} \right)}
\end{align}
It is obvious that $\|\g^{(m)}_{t}\|$ being small for all $1\leq m\leq \ngroup$ is a sufficient condition to guarantee $\|\g_t\|$ to be small. Therefore,
\begin{align*}
\P\left(\|\g_{t}\|<\sqrt{\ngroup}\big(\M^{-1}(a)+\frac{\xi}{r\sqrt{\ngroup}}\big)\right)
&\geq \P\left(\bigcap_m\|\g^{(m)}_{t}\|^2<\big(\M^{-1}(a)+\frac{\xi}{r\sqrt{\ngroup}}\big)^2\right)
\end{align*}
and consequently we have the high probability bound for any $r>1, a>0$:
\begin{align}
\max_t \P\left(\|\g_{t}\|<\sqrt{\ngroup}\M^{-1}(a;r,\xi,\gamma)+\frac{\xi}{r}\right)
\geq 1-\frac{2\sqrt{M}}{a\sqrt{T}}\sqrt{2(L_0-L_*)\mathcal{L}\left(1+\frac{\sigma^2 d}{B^2} \right)}.
\label{eq:high prob inequality 1}
\end{align}

In order for $\|\g_t\|$ to converge to zero, we need both $\M^{-1}(a)\to 0$ and $\frac{\xi}{r}\to 0$, as $T\to \infty$. I.e. we consider $a\to 0$. We use \Cref{lem:M inverse and asymptotic} to claim that, under any fixed $r$, \begin{align*}
\sqrt{\ngroup}\M^{-1}(a)+\frac{\xi}{r}
&= r\cdot \frac{a\ngroup(\frac{\xi}{\sqrt{\ngroup}}+\gamma)^2}{2\xi\gamma}+\frac{1}{r}\cdot \left(\xi-\frac{a\xi}{2\gamma}\right)+o(a)
\end{align*}
so that
\begin{align*}
\min_r \sqrt{\ngroup}\M^{-1}(a)+\frac{\xi}{r}
&= 2\sqrt{\frac{aM(\frac{\xi}{\sqrt{\ngroup}}+\gamma)^2}{2\xi\gamma}\cdot \left(\xi-\frac{a\xi}{2\gamma}\right)}+o(a)
\end{align*}
where the last equality is obvious for a hyperbola with respect to $r$. 
In fact, the square root term simplifies to $\sqrt{2a(\xi+\gamma \sqrt{\ngroup})^2/\gamma}+O(a^{1.5})$, and so does the whole term. 
To put this into perspective, we denote $\varrho:=\frac{2\sqrt{M}}{a\sqrt{T}}\sqrt{2(L_0-L_*)\mathcal{L}\left(1+\frac{\sigma^2 d}{B^2} \right)}$. Then we can write \Cref{eq:high prob inequality 1} asymptotically
\begin{align*}
\max_t \P\left(\|\g_{t}\|<\sqrt{2a(\xi+\gamma \sqrt{\ngroup})^2/\gamma}+O(a^{1.5})\right)
\geq 1-\varrho.
\end{align*}
which becomes
\begin{align*}
\max_t \P\left(\|\g_{t}\|<2(\xi+\gamma \sqrt{\ngroup})\sqrt{\frac{\sqrt{\ngroup}}{\varrho\gamma}\sqrt{\frac{2(L_0-L_*)\mathcal{L}\left(1+\frac{\sigma^2 d}{B^2} \right)}{T}}}+O\left(\frac{1}{\varrho^{1.5}T^{0.75}}\right)\right)
\geq 1-\varrho.
\end{align*}

\end{proof}

\begin{lemma}
\label{lem:M inverse and asymptotic}
The explicit form of $\M^{-1}$ is
\begin{align}
\M^{-1}(x;r,\xi,\gamma)=\frac{-\frac{\xi}{r\sqrt{\ngroup}}\gamma+(r^2-1)\frac{\xi}{r\sqrt{\ngroup}}x+r\gamma x+\gamma\sqrt{(\frac{\xi}{r\sqrt{\ngroup}})^2+2\frac{\xi}{\sqrt{\ngroup}} x+2\gamma x+x^2}}{2\gamma-(r^2-1)x},
\label{eq:M inverse complex formula}
\end{align}
and the asymptotic form (as $x\to 0$) is linear:
\begin{align}
\M^{-1}(x;r,\xi,\gamma)= x\cdot \frac{r^2(\frac{\xi}{\sqrt{\ngroup}}+\gamma)^2-(\frac{\xi}{\sqrt{\ngroup}})^2}{2\frac{\xi}{\sqrt{\ngroup}}\gamma r}+o(x),
\label{eq:M inverse asymptotic formula}
\end{align}
\end{lemma}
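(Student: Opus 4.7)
The strategy is to invert the defining relation $y = \M(x;r,\xi,\gamma)$ algebraically and then Taylor-expand. Introducing the shift $u := x + \frac{\xi}{r\sqrt{\ngroup}}$ and abbreviating $\alpha := \frac{\xi}{r\sqrt{\ngroup}}$, the bracket inside $\M$ collapses into a single rational function: the numerator becomes $(r+1)u + \gamma - [(r-1)u + \gamma] = 2u$ and the denominator factors as $((r-1)u+\gamma)((r+1)u+\gamma) = (r^2-1)u^2 + 2r\gamma u + \gamma^2$. Hence $\M(x) = y$ rewrites as
\begin{equation*}
y \;=\; \frac{2\gamma\, u(u-\alpha)}{(r^2-1)u^2 + 2r\gamma u + \gamma^2}.
\end{equation*}

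\textbf{Solving the quadratic and selecting the branch.} Clearing denominators yields the quadratic $\bigl[(r^2-1)y - 2\gamma\bigr] u^2 + 2\gamma(ry+\alpha) u + \gamma^2 y = 0$. The discriminant (up to the factor $4\gamma^2$) simplifies cleanly,
\begin{equation*}
(ry+\alpha)^2 - \bigl[(r^2-1)y - 2\gamma\bigr]y \;=\; y^2 + 2(r\alpha+\gamma)y + \alpha^2,
\end{equation*}
and substituting $r\alpha = \tfrac{\xi}{\sqrt{\ngroup}}$ recovers exactly the radicand of \eqref{eq:M inverse complex formula}. To fix the sign ambiguity I would invoke the boundary condition $\M^{-1}(0)=0$: since \Cref{lem: M explicit form} shows $\M$ is strictly increasing with $\M(0)=0$, at $y=0$ the correct root must give $u = \alpha$, i.e.\ $x=0$. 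Evaluating the two candidate roots at $y=0$ forces the $+\gamma\sqrt{\cdot}$ branch. Substituting back $x = u - \alpha$ and multiplying numerator and denominator by $-1$ rearranges the result into precisely the stated form \eqref{eq:M inverse complex formula}.

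\textbf{Asymptotics.} For the second claim I would Taylor-expand the square root around $y=0$:
\begin{equation*}
\sqrt{\alpha^2 + 2(r\alpha+\gamma)y + y^2} \;=\; \alpha + \frac{(r\alpha+\gamma)y}{\alpha} + O(y^2).
\end{equation*}
The $-\alpha\gamma$ constant in the numerator of \eqref{eq:M inverse complex formula} cancels against $\gamma\alpha$ from the expansion, so the numerator becomes $O(y)$ with linear coefficient $r\gamma + (r^2-1)\alpha + \frac{\gamma(r\alpha+\gamma)}{\alpha}$, while the denominator tends to $2\gamma$. Multiplying the resulting ratio through by $\alpha/\alpha$ reveals the telescoping identity $r^2\alpha^2 + 2r\gamma\alpha + \gamma^2 - \alpha^2 = (r\alpha+\gamma)^2 - \alpha^2$ in the numerator and $2\gamma\alpha$ in the denominator. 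A final scaling of numerator and denominator by $r$ (using $r\alpha = \tfrac{\xi}{\sqrt{\ngroup}}$) converts this into $\tfrac{r^2(\xi/\sqrt{\ngroup}+\gamma)^2 - (\xi/\sqrt{\ngroup})^2}{2(\xi/\sqrt{\ngroup})\gamma r}$, which is \eqref{eq:M inverse asymptotic formula}.

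\textbf{Main obstacle.} No step is deep; the entire argument is elementary once one recognises the quadratic structure after the shift $u = x+\alpha$. The only real pitfall is bookkeeping: (i) picking the correct branch of the quadratic formula via the continuity/boundary condition at $y=0$ rather than by brute inspection, and (ii) confirming that the two equivalent algebraic presentations of the linear coefficient coincide (the raw form $\tfrac{(r\alpha+\gamma)^2-\alpha^2}{2\gamma\alpha}$ vs.\ the presented form $\tfrac{r^2(\xi/\sqrt{\ngroup}+\gamma)^2-(\xi/\sqrt{\ngroup})^2}{2(\xi/\sqrt{\ngroup})\gamma r}$), which amounts to multiplying through by $r/r$. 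One should also remark that the inversion and expansion are valid only on the interval $0 \le y < \tfrac{2\gamma}{r^2-1}$ where the denominator $2\gamma - (r^2-1)y$ is positive; this covers the asymptotic regime $y \to 0$ that is actually used in \Cref{thm: convergence DPSGD AUTO}.
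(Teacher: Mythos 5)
Your argument is correct and considerably more self-contained than what the paper records. For the explicit formula, the paper merely states that \eqref{eq:M inverse complex formula} ``can be easily verified by $\M^{-1}(\M(x))=x$,'' referencing prior work and a WolframAlpha query; you instead \emph{derive} it from scratch via the substitution $u=x+\alpha$, which neatly collapses $\M$ into the rational function $\frac{2\gamma u(u-\alpha)}{(r^2-1)u^2+2r\gamma u+\gamma^2}$, clear denominators to get a quadratic in $u$, and select the branch with the boundary condition $\M^{-1}(0)=0$ (indeed the ``$-$'' root of $u$, which becomes ``$+\gamma\sqrt{\cdot}$'' after flipping the sign of numerator and denominator). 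The discriminant simplification $(ry+\alpha)^2-((r^2-1)y-2\gamma)y=y^2+2(r\alpha+\gamma)y+\alpha^2$ checks out, as does your telescoping of the linear coefficient to $\frac{(r\alpha+\gamma)^2-\alpha^2}{2\gamma\alpha}$. For the asymptotics the paper takes a different shortcut: it linearizes $\M(x)=\M'(0)\,x+O(x^2)$ directly from \eqref{eq:M explicit} and then inverts the linear part, whereas you Taylor-expand the square root inside your own explicit formula and cancel the $O(1)$ term; both give the same coefficient, and your route has the merit of flowing from what you already proved. Two small nits: the rescaling that converts $\frac{(r\alpha+\gamma)^2-\alpha^2}{2\gamma\alpha}$ into the paper's presentation is by $r^2/r^2$ rather than $r/r$ (one $r$ to turn each $\alpha$ in $(r\alpha+\gamma)^2-\alpha^2$ into $\xi/\sqrt{\ngroup}$, and one $r$ absorbed into the denominator), and when invoking $\M^{-1}(0)=0$ it is worth noting that the quadratic also has the spurious root $u=0$ (i.e.\ $x=-\alpha$, where $\M$ also vanishes), so the branch choice really is resolving a genuine two-valuedness; your monotonicity remark handles this, but it deserves a sentence. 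Neither issue affects the validity of the argument.
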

\begin{proof}[Proof of \Cref{lem:M inverse and asymptotic}]
The explicit form of $\M^{-1}$ can be easily verified by 
$\M^{-1}(\M(x))=x$. In fact, this has already been shown in \cite{bu2022automatic} where $M=1$ (i.e. we switch $\xi$ to $\xi/\sqrt{\ngroup}$). One can also check this using \Cref{eq:M explicit} and WolframAlpha by searching \begin{verbatim}
inverse function cx/((r-1)(x+a)+c)-cx/((r+1)(x+a)+c)
\end{verbatim}
where $\texttt{c}$ means $\gamma$, $\texttt{a}$ means $\frac{\xi}{r\sqrt{\ngroup}}$, and $\texttt{r}$ means $r$.

The asymptotic form of $\M^{-1}$ can be derived from the asymptotic form of $\M$ in \Cref{eq:M explicit}:
$$\M(x)= x \left(\frac{\gamma}{(r-1)\frac{\xi}{r\sqrt{\ngroup}}+\gamma}-\frac{\gamma}{(r+1)\frac{\xi}{r\sqrt{\ngroup}}+\gamma}\right)+O(x^2)=\frac{2\frac{\xi}{\sqrt{\ngroup}}\gamma r x}{r^2(\frac{\xi}{\sqrt{\ngroup}}+\gamma)^2-(\frac{\xi}{\sqrt{\ngroup}})^2}+O(x^2).$$
This can be checked by WolframAlpha through
\begin{verbatim}
x(c/((r-1)(x+a)+c)-c/((r+1)(x+a)+c)) expand x=0
\end{verbatim}
Therefore, we have
$$\M^{-1}(x)= x\cdot \frac{r^2(\frac{\xi}{\sqrt{\ngroup}}+\gamma)^2-(\frac{\xi}{\sqrt{\ngroup}})^2}{2\frac{\xi}{\sqrt{\ngroup}}\gamma r}+O(x^2).$$
\end{proof}

\subsection{Proof of \Cref{thm: convergence DPSGD AUTO}: standard SGD}
\begin{proof}
This proof is similar to Theorem 4 in \cite{bu2022automatic}, though theirs is of expected convergence and ours is of high probability. Consider standard (non-DP) SGD,
$$\w^{(m)}_{t+1}=\w^{(m)}_{t}-\eta \frac{\sum_i  \g^{(m)}_{t,i}}{B}$$
where $ \g^{(m)}_{t,i}$ is i.i.d. unbiased estimate of $\g^{(m)}_{t}$ with a bounded variance in \Cref{assumption: tilde g}.

By Lipschitz smoothness in \Cref{assumption: Lipschitz},
\begin{align*}
&L_{t+1}-L_t
\leq \sum_m \left[\g^{(m)\top}_{t} (\w^{(m)}_{t+1}-\w^{(m)}_{t})\right]+\frac{\mathcal{L}}{2}\sum_m\|\w^{(m)}_{t+1}-\w^{(m)}_{t}\|^2
\\
&=-\eta \sum_m\left[\g^{(m)\top}_{t}\left(\sum_i\frac{ \g^{(m)}_{t,i}}{B}\right)\right]+\frac{\mathcal{L}\eta^2}{2}\sum_m\left\|\sum_i \frac{ \g^{(m)}_{t,i}}{B}\right\|^2.
\end{align*}

The expected improvement at one iteration is
\begin{align}
\begin{split}
\E (L_{t+1}-L_t|\w_t)
&\leq -\eta \sum_m  \g^{(m)\top}_{t}\E\left( \g^{(m)}_{t,i}\right)+\frac{\mathcal{L}\eta^2}{2} \sum_m\E\left(\|\sum_i \frac{ \g^{(m)}_{t,i}}{B}\|^2\right)
\\
&\leq -\eta \sum_m \g^{(m)\top}_{t} \g^{(m)}_{t}+\frac{\mathcal{L}\eta^2}{2}\sum_m(\|\g^{(m)}_{t}\|^2+\frac{\xi^2}{LB})
\\
&=-\eta \|\g_{t}\|^2+\frac{\mathcal{L}\eta^2}{2}(\|\g_{t}\|^2+\frac{\xi^2}{B})
\end{split}
\label{eq:expected improvement nonDP}
\end{align}

Notice that \Cref{eq:expected improvement nonDP} does not require the symmetry assumption in \Cref{assumption: tilde g} for the per-sample gradient noise. We extend the expectation over randomness in the trajectory, and perform a telescoping sum,
\begin{align*}
L_0-L_*
&\geq \sum_t \E(L_t-L_{t+1})
\geq \left(\eta-\frac{\mathcal{L}\eta^2  }{2}\right)\E(\sum_t\|\g_t\|^2)-\frac{T\mathcal{L}\eta^2 \xi^2}{2B}
\end{align*}

We apply the same learning rate as in \cite{bernstein2018signsgd} and \cite{bu2022automatic}, namely $\eta=1/\mathcal{L}\sqrt{T}$, to get
\begin{align*}
L_0-L_*
\geq \left(\frac{1}{\mathcal{L}\sqrt{T}}-\frac{1}{2\mathcal{L}T}\right)\E\left(\sum_t\|\g_t\|^2\right)-\frac{\xi^2}{2B\mathcal{L}}
> \frac{\sqrt{T}}{2\mathcal{L}}\E\left(\frac{1}{T}\sum_t\|\g_t\|^2\right)-\frac{\xi^2}{2B\mathcal{L}}
\end{align*}
and thus
\begin{align*}
\min_t\E\left(\|\g_t\|^2\right)
\leq \frac{1}{T}\sum_t\E\left(\|\g_t\|^2\right)
= \E\left(\frac{1}{T}\sum_t\|\g_t\|^2\right)
\leq \frac{1}{\sqrt{T}}\left[2(L_0-L_*)\mathcal{L}
+\frac{\xi^2}{B}\right]
\end{align*}
Using the Jensen's inequality and the Markov's inequality, we can have
\begin{align*}
\min_t a\cdot \P(\|\g_t\|>a)
\leq \min_t\E\left(\|\g_t\|\right)
\leq \min_t\sqrt{\E\left(\|\g_t\|^2\right)}
\leq \frac{1}{T^{1/4}}\sqrt{2(L_0-L_*)\mathcal{L}
+\frac{\xi^2}{B}}
\end{align*}
for any positive constant $a$. Denoting $a=\frac{1}{T^{1/4}}\sqrt{2(L_0-L_*)\mathcal{L}
+\frac{\xi^2}{B}}/\varrho$, we have
\begin{align*}
\max_t \P\left(\|\g_t\|<\frac{1}{\varrho T^{1/4}}\sqrt{2(L_0-L_*)\mathcal{L}
+\frac{\xi^2}{B}}\right)
\geq 1-\varrho.
\end{align*}
\end{proof}

\section{Experiment settings}
\label{app:settings}
All experiments are fully fine-tuned using a single Nvidia A100 GPU.
\begin{table}[!htb]
    \centering
    \resizebox{\linewidth}{!}{
    \begin{tabular}{c||c|c|c|c|c|c|c}
     Dataset& CIFAR/SVHN/Food101&GTSRB&MNLI(m/mm)&QQP&QNLI&SST2&E2E  \\\hline
 Model&\multicolumn{2}{c|}{ViT}&\multicolumn{4}{c|}{RoBERTa}&GPT2\\\hline
 Epoch&5&10&18&18&6&3&10\\
 Batch size&1000&1000&12000&12000&4000&2000&1000\\
 DP learning rate &5e-4&5e-4&3e-4&3e-4&3e-4&3e-4&1e-3\\
 learning rate schedule&---&---&---&---&---&---&---\\
 AdamW weight decay&0.01&0.01&0&0&0&0&0.01\\
 Hidden feature dimension&224*224&224*224&256&256&256&256&100\\
    \end{tabular}
    }
    \caption{Hyperparameters for \Cref{tab:vision}, \Cref{tab:roberta}, and \Cref{tab:E2E GPT selected}. Note that we use automatic clipping which need not to set the clipping threshold.}
\end{table}

\section{Extra experiments}
\begin{table*}[!htb]
    \centering
\setlength\tabcolsep{2pt}
\caption{Test accuracy of text classification tasks under group-wise clipping styles.}
\resizebox{\linewidth}{!}{    
\begin{tabular}{|c|c|c|cccc|cccc|}
\hline
\multirow{2}{*}{Model}&\multicolumn{2}{c|}{\multirow {2}{*}{Method}}&\multicolumn {4}{c}{$\epsilon=3$}& \multicolumn{4}{c|}{$\epsilon=8$}
\\\cline{4-11}
&\multicolumn{2}{c|}{}&MNLI&QQP&QNLI&SST2&MNLI&QQP&QNLI&SST2\\\hline
\multirow{5}{*}{\shortstack{RoBERTa\\-base}}&RGP& \cite{yu2021large} &-/-&-&-&-&80.5/-&85.5&87.2&91.6
\\
&all-layer $(M=1)$ &\citep{li2021large}&82.45/82.99&85.56&87.42&91.86&83.20/83.46&86.08&87.94&92.09\\
&all-layer $(M=1)$& \citep{bu2022automatic}&83.22/83.21&85.76&86.91&92.32&83.82/83.55&86.58&87.85&92.43
\\
&block-wise $(M=12)$ &ours&82.55/83.19&84.14&85.94&91.74&83.06/83.29&84.73&86.40&91.97
\\
&layer-wise $(M=103)$ &ours&82.02/82.56&83.26&	85.85&91.40&82.24/82.84	&83.49&86.42&92.09
\\
&layer-wise* $(M=103)$ &\cite{anonymous2023exploring}&82.83/83.27& 85.67 &86.13 &92.03& 83.70/83.97& 86.23& 87.13& 92.40
\\
&param-wise $(M=203)$&ours&81.63/82.10& 82.52& 85.25& 91.28& 82.22/82.49& 82.80& 86.09 &91.63
\\\hline\hline
\multirow{5}{*}{\shortstack{RoBERTa\\-large}}&RGP&\cite{yu2021large}&-/-&-&-&-&86.1/-&86.7&90.0&93.0
\\
&all-layer $(M=1)$& \cite{li2021large}&86.43/86.46&86.43&90.76&93.04&87.02/87.26&87.47&91.10&93.81\\
&all-layer $(M=1)$& \citep{bu2022automatic}&86.27/86.67&86.7&91.01&93.92&87.07/87.16&87.47&91.45&94.61
\\
&block-wise $(M=24)$ &ours&87.26/87.28&85.81&89.86&94.15&87.54/87.29&86.55&90.78&94.61
\\
&layer-wise $(M=199)$ &ours&86.37/86.66&84.78&	89.60&94.38&86.53/86.93&85.22&90.10&94.50
\\
&layer-wise* $(M=199)$& \cite{anonymous2023exploring} &87.10/87.20& 86.80& 89.80& 93.87& 87.67/87.57& 87.20& 90.77 &94.03
\\
&param-wise $(M=395)$&ours&86.47/86.38& 84.49& 89.11& 93.72& 86.43/86.39& 85.17& 89.84 &94.27
\\\hline
\end{tabular}
}
\label{tab:roberta}
\end{table*}

\end{document}